\documentclass[11pt]{article}

\usepackage[utf8]{inputenc}
\usepackage{amsmath}
\usepackage{amsfonts}
\usepackage{amssymb}
\usepackage{amsthm}
\usepackage{graphicx}
\usepackage{caption}
\usepackage{xcolor}
\usepackage{float}
\usepackage[hidelinks,colorlinks=true,linkcolor=blue,citecolor=blue]{hyperref}
\usepackage{natbib}
\usepackage{verbatim}
\usepackage{mathtools}
\usepackage{tikz}
\usepackage{bbm}
\usepackage{subcaption}
\usepackage{authblk}
\theoremstyle{plain}
\newtheorem{theorem}{Theorem}
\newtheorem{proposition}{Proposition}
\newtheorem{lemma}{Lemma}
\newtheorem{corollary}{Corollary}

\newtheorem{definition}{Definition}

\newcommand{\reals}{\mathbb{R}}
\newcommand{\naturals}{\mathbb{N}}

\newcommand{\rationals}{\mathbb{Q}}

\newcommand{\Acal}{\mathcal{A}}

\newcommand{\Fcal}{\mathcal{F}}
\newcommand{\Gcal}{\mathcal{G}}

\newcommand{\Xcal}{\mathcal{X}}
\newcommand{\Ycal}{\mathcal{Y}}
\newcommand{\Vcal}{\mathcal{V}}
\newcommand{\Wcal}{\mathcal{W}}

\newcommand{\Ecal}{\mathcal{E}}

\DeclareMathOperator*{\expect}{\mathbb{E}}

\DeclarePairedDelimiter\floor{\lfloor}{\rfloor}

\newcommand{\indicator}{\mathbbm{1}}

\newcommand{\norm}[1]{\left\lVert#1\right\rVert}

\usepackage[letterpaper,top=2cm,bottom=2cm,left=3cm,right=3cm,marginparwidth=1.75cm]{geometry}

\begin{document}

\title{Is Zero-Shot Super-Resolution Possible in Operator Learning?}
\author[ ]{Unique Subedi \qquad \qquad  Ambuj Tewari}
\affil[ ]{Department of Statistics, University of Michigan}
\affil[ ]{\texttt{\{subedi, tewaria\}@umich.edu}}
\date{}

\maketitle

\begin{abstract}%
   Neural operators are often reported to exhibit zero-shot super-resolution, a phenomenon in which a model trained on coarse grids produces accurate predictions on finer testing grids without additional retraining. Despite strong empirical evidence, the theoretical foundations of this phenomenon remain unclear. In this work, we provide a systematic theoretical study of zero-shot super-resolution in operator learning. We first show that zero-shot super-resolution can be information-theoretically impossible even in benign settings such as when the input functions are available over the entire continuum and the ground truth is a simple rank-one linear operator. We then identify H{\" o}lder smoothness of the output functions as a sufficient condition for zero-shot super-resolution and derive corresponding generalization bounds. Finally, we also validate the identified failure modes through experimental results.
\end{abstract}

\section{Introduction}
Operator learning is a data-driven approach for learning complex nonlinear mappings between infinite-dimensional function spaces. These mappings often correspond to solution operators of partial differential equations (PDEs) that map problem specifications or initial conditions to solution functions. Neural operators are a class of neural-network-based operator models designed to learn such mappings \citep{kovachki2023neural}. They have demonstrated remarkable empirical success across a wide range of PDEs arising in practice, from quantum mechanics \citep{mizera2023scattering} to fluid dynamics \citep{wang2024recent}. In addition, a growing body of work has reported an intriguing capability of neural operators, commonly referred to as zero-shot super-resolution \citep{li2020fourier}. 

Although operator learning concerns mappings between functions defined on continuous domains, the learner often has access only to function values on a predefined discrete grid for computational feasibility. In this setting, zero-shot super-resolution refers to the phenomenon in which models trained using supervision on a coarse output grid exhibit good performance when evaluated on a substantially finer grid at test time, \emph{without any additional retraining or fine-tuning.}  Despite being frequently reported in empirical studies \citep{jiang2023efficient, yang2024fourier, sinha2025effectiveness}, zero-shot super-resolution remains poorly understood from a theoretical perspective. In this paper, we formally define the problem of zero-shot super-resolution and initiate a systematic theoretical investigation of zero-shot super-resolution in operator learning.

Before presenting our results, we emphasize two important points. First, the learner is always evaluated through its predicted
outputs; therefore, zero-shot super-resolution is fundamentally a property of generalization across
\emph{output resolutions}. Second, the  ``zero-shot" property requires that the learner does not
observe output functions at finer resolutions during training. However, the learner may still have
access to higher-resolution input functions, or even continuum inputs, during training. 
While the setting in which input functions are available at higher resolution than output functions
may appear uncommon, it is not unrealistic. In many applications, input functions are specified by
the practitioner and are therefore relatively inexpensive to obtain at high resolution. In contrast,
generating output functions often requires running expensive PDE solvers, making high-resolution
outputs significantly more costly. The ability of many operator-learning models to handle inputs at varying resolutions is commonly
referred to as discretization invariance. Although related, discretization invariance and zero-shot
super-resolution are distinct concepts (see Section \ref{sec:zeroshot-vs-dis-inv}).

A first natural question in our investigation is whether zero-shot super-resolution is possible at all. Our first main result is an impossibility theorem establishing that zero-shot super-resolution inference can fail even in extremely benign settings. We construct a class of rank-one operators for which learning is trivial when the training and testing grids coincide, yet for which no estimator, including empirically successful ones such as Fourier Neural Operators and DeepONets, achieves vanishing error when evaluated on a finer grid than the one used for training. We further support our theoretical results with numerical experiments that demonstrate the failure modes predicted by our lower bound (see Figure \ref{fig:residual_plot}). This lower bound complements recent empirical findings of \citep{gao2025discretization, sakarvadia2025false}, which provide evidence that zero-shot super-resolution can fail in certain regimes. Taken together, these results establish that zero-shot super-resolution is impossible without additional structural assumptions.

\begin{figure}[h]
    \centering
    \includegraphics[width=\linewidth]{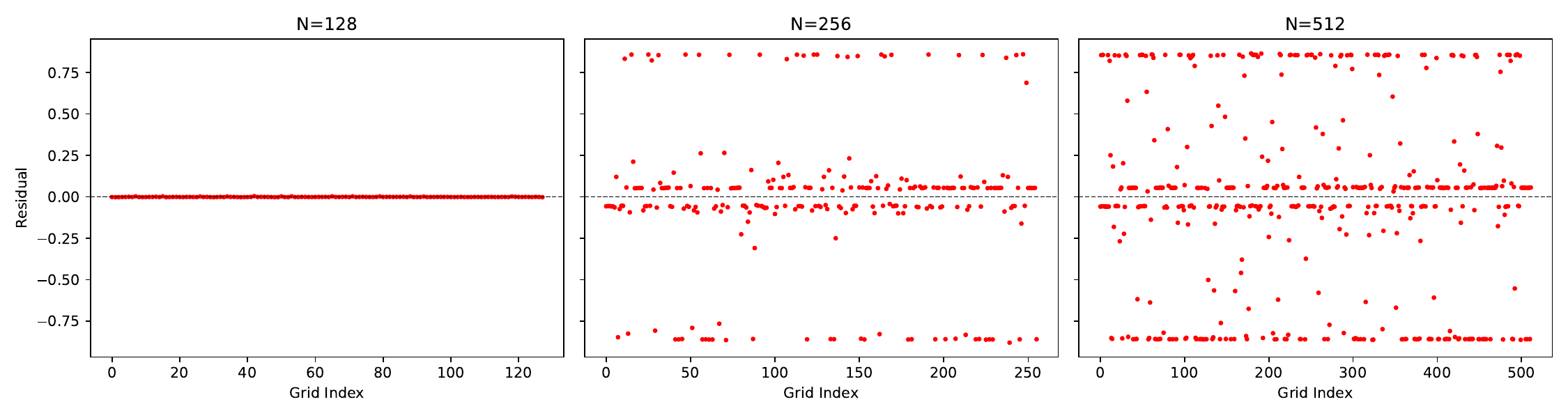}
    \caption{Residuals between the predicted output $\hat{w}$ and the ground truth $w$ at test resolutions $N = 128, 256, 512$, with training performed at $N = 128$. While residuals are negligible at the training resolution, errors grow substantially as the grid is refined, illustrating the failure of zero-shot super-resolution.}
    \label{fig:residual_plot}
\end{figure}

Given our lower bound, the next natural question is when zero-shot super-resolution is possible. We show that zero-shot super-resolution can indeed be achieved when the output functions produced by both the ground-truth operator and the learned operator are H{\" o}lder continuous. Under this assumption, we derive a zero-shot generalization bound that decomposes the test-time error on a finer grid into the error on the coarse grid and a term capturing extrapolation across resolutions. Finally, we connect our assumptions to classical PDE theory and modern neural-operator architectures. We show that H{\" o}lder regularity  arises naturally for solutions of many elliptic and parabolic PDEs via standard regularity results, and we verify that common neural-operator architectures produces continuous outputs under mild conditions on their kernels and biases.

\section{Related Works}\label{sec:related_works}
Zero-shot super-resolution in operator learning was first reported by \cite{li2020fourier} in the context of Fourier Neural Operators. Since then, the ability of neural operators to exhibit zero-shot super-resolution has been reported in a number of empirical studies across diverse applied settings \citep{jiang2023efficient, luo2024hierarchical, yang2024fourier, yasuda2025zero, sinha2025effectiveness}.

Such cross-resolution generalization is feasible in the first place because these models are discretization invariant, in the sense that they can be evaluated on grids with resolutions different from those used during training. These discretization-invariant operator-learning frameworks were developed in a series of works, with representative examples including \citep{bhattacharya2021model, li2020neural, li2020fourier, lu2021learning, nelsen2021random, bartolucci2023representation}. We refer the reader to the survey articles by \cite{boulle2023mathematical}, \cite{kovachki2024operator}, \cite{azizzadenesheli2024neural}, and \cite{subedi2025operator} for a more comprehensive overview.

Although zero-shot super-resolution has been widely reported in empirical studies, its failures have also been documented. For example, \cite{li2024physics} observed that Fourier Neural Operators fail to match the behavior of Kolmogorov flow in frequency regimes not present in the training data, and proposed physics-informed constraints as a corrective measure. Similarly, \cite{raonic2023convolutional} noted the inability of common operator-learning models such as Fourier Neural Operators, DeepONets, and Galerkin Transformers to achieve zero-shot super-resolution for transport equations, and hypothesized that this limitation arises from a lack of translation equivariance. They empirically showed that convolutional neural operators, which are translation equivariant, can generalize to unseen resolutions for these equations. Moreover, \cite{gao2025discretization} reported analogous failures of standard operator-learning methods in cross-resolution generalization and proposed techniques to mitigate them. \cite{gao2025discretization} also established an upper bound on the difference in prediction error between operators evaluated at two different resolutions and inferred, based on the upper bound, that the error can grow as the resolution gap increases. While such upper bounds are useful for building intuition, they do not conclusively rule out zero-shot super-resolution, as doing so requires a corresponding lower bound on the error.

A recent empirical study by \cite{sakarvadia2025false} systematically evaluated the methods proposed in \cite{li2024physics, raonic2023convolutional, gao2025discretization} and demonstrated that these approaches still fail to achieve reliable cross-resolution inference. Based on these findings, \cite{sakarvadia2025false} concluded that zero-shot super-resolution is fundamentally an out-of-distribution generalization problem and proposed multi-resolution training as a remedy. While this strategy can improve performance at higher resolutions, it breaks the zero-shot paradigm by allowing the model access to higher-resolution data during training.

\section{Problem Formulation}

Let $ \Xcal \subseteq \mathbb{R}^p $ and $ \Ycal \subseteq \mathbb{R}^d $ be bounded domains. Define $ L^2(\Xcal) := \{ f : \Xcal \to \mathbb{R} \mid \int_{\Xcal} |f(x)|^2\, dx < \infty \} $ as the space of square-integrable functions over $ \Xcal $. Let $ \Vcal \subseteq L^2(\Xcal) $ and $ \Wcal \subseteq L^2(\Ycal) $ be Banach spaces of functions, and suppose the ground truth operator of interest is $ \operatorname{G} : \Vcal \to \Wcal $. For example, $ \operatorname{G} $ might represent a PDE solution operator.

In the statistical learning setting, the learner observes $ n $ i.i.d.\ samples $ \{(v_i, w_i)\}_{i=1}^n $, where $ v_i \sim \mu $ and $ w_i = \operatorname{G}(v_i) $ (or a noisy version thereof). Crucially, we assume that outputs $ w_i $ are only available on a discrete output grid $ \Ycal_{\mathrm{train}} \subset \Ycal $, with $ |\Ycal_{\mathrm{train}}| < \infty $. For now, we assume that the inputs $ v_i $ are accessible over the entire continuous domain $ \Xcal $. The case where inputs are observed only on a discrete grid $ \Xcal_{\mathrm{train}} $ is discussed in Section~\ref{sec:discreteX}, with additional related remarks provided in Section~\ref{sec:zeroshot-vs-dis-inv}. Using the observed data, the learner constructs an estimator $ \widehat{\operatorname{F}}_n $. For simplicity, we use the same notation to refer both to the learned operator and to the learning rule. Although $ \widehat{\operatorname{F}}_n $ can in principle be any map $ \Vcal \to \Wcal $, in practice it is usually chosen from a restricted function class $ \Fcal \subseteq \Wcal^{\Vcal} $, such as neural operators \citep{kovachki2023neural}. Note that the ground truth operator $ \operatorname{G} $ may not belong to $ \Fcal $.

At the very least, the learner's goal is to minimize the expected error on the training grid defined as
\[
\Ecal_{\mu}(\widehat{\operatorname{F}}_n, \operatorname{G}, \Ycal_{\mathrm{train}}) := \expect_{v \sim \mu} \left[ \frac{1}{|\Ycal_{\mathrm{train}}|} \sum_{y \in \Ycal_{\mathrm{train}}} \left( \widehat{\operatorname{F}}_n(v)(y) - \operatorname{G}(v)(y) \right)^2 \right].
\]
However, often at test time, the output of the estimator is evaluated on a finer grid $ \Ycal_{\mathrm{test}} \supseteq \Ycal_{\mathrm{train}} $, where $ |\Ycal_{\mathrm{test}}| < \infty $. The test-time error is defined as
\[
\mathcal{E}_{\mu}(\widehat{\operatorname{F}}_n, \operatorname{G}, \Ycal_{\mathrm{test}}) := \expect_{v \sim \mu} \left[ \frac{1}{|\Ycal_{\mathrm{test}}|} \sum_{y \in \Ycal_{\mathrm{test}}} \left( \widehat{\operatorname{F}}_n(v)(y) - \operatorname{G}(v)(y) \right)^2 \right].
\]

We emphasize that the estimator is trained using supervision only on the coarse grid
$\Ycal_{\mathrm{train}}$, yet is expected to generalize to the finer grid
$\Ycal_{\mathrm{test}}$. This is precisely the main goal of zero-shot super-resolution. Accordingly, we seek conditions under which a small expected error
$\mathcal{E}_{\mu}(\widehat{\operatorname{F}}_n, \operatorname{G}, \Ycal_{\mathrm{train}})$ on the
training grid guarantees a correspondingly small expected error on the finer testing grid
$\mathcal{E}_{\mu}(\widehat{\operatorname{F}}_n, \operatorname{G}, \Ycal_{\mathrm{test}})$, even
though the learning algorithm has never observed outputs on
$\Ycal_{\mathrm{test}} \setminus \Ycal_{\mathrm{train}}$. Formally, we ask whether there exists a
condition such that, for every $\varepsilon > 0$, if there exists $n$ sufficiently large for which
\[
\mathcal{E}_{\mu}(\widehat{\operatorname{F}}_n, \operatorname{G}, \Ycal_{\mathrm{train}})
\le \varepsilon,
\]
then
\[
\mathcal{E}_{\mu}(\widehat{\operatorname{F}}_n, \operatorname{G}, \Ycal_{\mathrm{test}})
\le c\,\varepsilon
\]
for some universal constant $c > 0$. Ideally, we want the aforementioned condition to be distribution-free. If such condition exists, then we say that zero-shot super-resolution is possible for the tuple $(\widehat{\operatorname{F}}_n, \operatorname{G}, \Ycal_{\mathrm{train}}, \Ycal_{\mathrm{test}})$.

\subsection{Zero-Shot Super-Resolution vs Discretization Invariance}\label{sec:zeroshot-vs-dis-inv}
In the operator learning community, the concept of zero-shot super-resolution is sometimes conflated with a related notion of discretization invariance, as formalized in \cite[Definition 4]{kovachki2023neural}. An operator $ \operatorname{F} $ is said to be discretization invariant with respect to a sequence of nested input grids $ \{\Xcal_k\}_{k \in \mathbb{N}} $, where $ \Xcal_k \subseteq \Xcal_{k+1} $ and $ \lim_{k \to \infty} \Xcal_k = \Xcal $, if
\[
\lim_{k \to \infty} \sup_{v \in \Acal} \left\| \operatorname{F}(v_{|\Xcal_k}) - \operatorname{F}(v) \right\|_{L^2(\Ycal)} = 0
\]
for every compact subset $ \Acal \subseteq \Vcal $, where $ v_{|\Xcal_k} $ denotes the restriction of $ v $ to the grid $ \Xcal_k $. This definition presupposes a meaningful way to evaluate $ \operatorname{F}(v_{|\Xcal_k}) $.

Importantly, discretization invariance focuses solely on input resolution. It ensures that as the input grid becomes increasingly refined, the operator’s output converges to its continuum evaluation. However, zero-shot super-resolution concerns a different question: how well an operator trained on a coarse \emph{output} grid generalizes to a finer one. Since discretization invariance is defined using the continuum norm in the output space, it provides no information about generalization across different \emph{output grids}.

To clearly separate these two notions, we begin by assuming that input functions $ v $ are available over the entire domain $ \Xcal $. This removes the issue of input discretization, allowing us to focus purely on output resolution. In fact, discretization invariance is trivially satisfied in this setting, as $ \Xcal_k = \Xcal $ for all $ k $. Despite this, our impossibility result in Section~\ref{sec:impossibility} shows that zero-shot super-resolution remains a fundamentally nontrivial problem, even when discretization invariance holds.

\section{Impossibility of Zero-Shot Super-Resolution: A Lower Bound}\label{sec:impossibility}

We now show that generalization to finer output grids cannot always be guaranteed, even for simple operator learning problems. Specifically, we construct a setting where learning is trivial when the training and testing grids coincide ($ \Ycal_{\mathrm{train}} = \Ycal_{\mathrm{test}} $), but impossible when the model is trained on a coarse grid and evaluated on a finer one.

Let $ \mathcal{X} = \mathcal{Y} = [0,1) $, where the exclusion of the endpoint is simply so that our construction allows periodic boundary conditions. Let $ N_1, N_2 \in \mathbb{N} $ with $ N_2 > N_1 $. The learner observes outputs $ w_i = \operatorname{G}(v_i) $ only on the coarse, uniformly spaced grid of size $N_1$ but evaluated on the finer grid of size $N_2$. That is,
\[
\mathcal{Y}_{\text{train}} := \left\{ 0, \tfrac{1}{N_1}, \ldots, \tfrac{N_1 - 1}{N_1} \right\} \quad \text{ and } \quad  \mathcal{Y}_{\text{test}} := \left\{ 0, \tfrac{1}{N_2}, \ldots, \tfrac{N_2 - 1}{N_2} \right\}.
\]
To ensure that $ \mathcal{Y}_{\text{train}} \subseteq \mathcal{Y}_{\text{test}} $, we assume $ N_2 = m N_1 $ for some $ m \in \mathbb{N} $, as done in prior works. For example,  \cite{li2020fourier} uses $ m = 2^k $ for $ k \geq 1 $.

We consider a learning problem where the ground truth operator $ \operatorname{G} $ belongs to a known class $ \Gcal $, though the learner does not know which specific operator in the class is realized. This reflects practical scenarios where structural knowledge about the PDE allows the learner to constrain the solution operator to a known class, even if the specific instance depends on problem-specific parameters. The result below shows that learning $ \operatorname{G} $ is straightforward when the training and testing grids match, but zero-shot super-resolution is impossible when the evaluation grid is finer than the training grid.

\begin{theorem}[Impossibility of Zero-Shot Super-Resolution]
\label{thm:lb-general}
Let $\Xcal = \Ycal = [0,1)$, and let $\Gcal$ be a known class of rank-one linear operators from
$L^2(\Xcal)$ to $L^2(\Ycal)$ such that $\|\operatorname{G}\|_{\mathrm{op}} \le 1$ for all
$\operatorname{G} \in \Gcal$. Let $\mu$ be a probability measure supported on $L^2(\Xcal)$ such
that every $v \sim \mu$ satisfies
\[
a \le |v(x)| \le 1 \quad \text{for all } x \in \Xcal,
\]
for some constant $a > 0$. Then there exists a ground-truth operator $\operatorname{G} \in \Gcal$ such that, for any $ N_1, N_2 \in \mathbb{N} $ with $ N_2 = mN_1 $, the following
statements hold.

\begin{itemize}
\item[\emph{(i)}] \emph{(Perfect learning on the training grid)} 
There exists an estimator
$\widehat{\operatorname{F}}_1' : L^2(\Xcal) \to L^2(\Ycal)$, depending only on a single sample
$(v_1, w_1)$ with $v_1 \sim \mu$ and the output $w_1 = \operatorname{G}(v_1)$ observed only on
$\Ycal_{\mathrm{train}}$, such that
\[
\expect_{v \sim \mu}
\left[
\frac{1}{|\Ycal_{\mathrm{train}}|}
\sum_{y \in \Ycal_{\mathrm{train}}}
\bigl( \widehat{\operatorname{F}}_1'(v)(y) - \operatorname{G}(v)(y) \bigr)^2
\right]
= 0.
\]

\item[\emph{(ii)}] \emph{(Failure of zero-shot super-resolution)} 
For every estimator $\widehat{\operatorname{F}}_n : L^2(\Xcal) \to L^2(\Ycal)$ trained on samples
$\{(v_i, w_i)\}_{i=1}^n$, where $v_i \sim \mu$ and each output $w_i = \operatorname{G}(v_i)$ is
observed only on the training grid $\Ycal_{\mathrm{train}}$, the expected test error on the finer
grid satisfies
\[
\expect_{v \sim \mu}
\left[
\frac{1}{|\Ycal_{\mathrm{test}}|}
\sum_{y \in \Ycal_{\mathrm{test}}}
\bigl( \widehat{\operatorname{F}}_n(v)(y) - \operatorname{G}(v)(y) \bigr)^2
\right]
\;\ge\;
\frac{a^2}{4}\left(1 - \frac{1}{m}\right).
\]
\end{itemize}
\end{theorem}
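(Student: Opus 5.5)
The plan is to fix one class $\Gcal$ that does not depend on $N_1$ or $m$. Each member will agree with the others almost everywhere but be free pointwise on the countable set of rationals in $[0,1)$. Part (i) then comes from an exact interpolation estimator, and part (ii) from an averaging (probabilistic-method) argument over a random member of $\Gcal$. Concretely, for every sign function $\sigma:\rationals\cap[0,1)\to\{-1,+1\}$ define $\psi_\sigma(y)=\sigma(y)$ for rational $y$ and $\psi_\sigma(y)=1$ otherwise. Set $\operatorname{G}_\sigma(v) := \langle v,\indicator\rangle\,\psi_\sigma$ and $\Gcal=\{\operatorname{G}_\sigma\}$. Each $\operatorname{G}_\sigma$ is rank one, and since $\psi_\sigma=1$ a.e., $\|\operatorname{G}_\sigma\|_{\mathrm{op}}=\|\indicator\|_{L^2}\|\psi_\sigma\|_{L^2}=1$. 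Because every grid $\Ycal_{\mathrm{train}}$, $\Ycal_{\mathrm{test}}$ (for all $N_1,m$) consists of rationals, pointwise values on these grids are governed entirely by $\sigma$. This is what makes a single class serve all grids at once.

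For (i), given $(v_1,w_1)$ with $w_1$ known on $\Ycal_{\mathrm{train}}$, I would define $\widehat{\operatorname{F}}_1'(v)(y) := \frac{\langle v,\indicator\rangle}{\langle v_1,\indicator\rangle}\,w_1(y)$ for $y\in\Ycal_{\mathrm{train}}$, extended arbitrarily (e.g.\ by $\langle v,\indicator\rangle$) elsewhere. Since $w_1(y)=\langle v_1,\indicator\rangle\sigma(y)$, this reproduces $\operatorname{G}_\sigma(v)$ exactly on the training grid, so the training error is $0$. The step that needs care is $\langle v_1,\indicator\rangle\neq 0$. The hypothesis $a\le|v|\le 1$ gives $|\langle v,\indicator\rangle|\ge a$ when $v$ has constant sign, and I would use this reading, i.e.\ that $v$ does not change sign, as the paper's measures intend. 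Otherwise I would replace $\indicator$ by a suitable fixed test function $\varphi$ adapted to $\mu$.

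For (ii), fix any estimator $\widehat{\operatorname{F}}_n$ and any $N_1,m$, and draw $\sigma$ with i.i.d.\ uniform signs. The data $\{(v_i,w_i|_{\Ycal_{\mathrm{train}}})\}$ depend on $\sigma$ only through $\sigma|_{\Ycal_{\mathrm{train}}}$ and the $v_i$. Hence, conditionally on the data, $\sigma(y)$ for $y\in\Ycal_{\mathrm{test}}\setminus\Ycal_{\mathrm{train}}$ is still a uniform sign independent of the prediction $\widehat{\operatorname{F}}_n(v)(y)$. For such $y$,
\[
\expect_\sigma\bigl(\widehat{\operatorname{F}}_n(v)(y)-\langle v,\indicator\rangle\sigma(y)\bigr)^2
= \widehat{\operatorname{F}}_n(v)(y)^2+\langle v,\indicator\rangle^2 \ge a^2 .
\]
There are $N_2-N_1=N_2(1-1/m)$ such points, so the averaged test error is at least $a^2(1-1/m)\ge\frac{a^2}{4}(1-\frac1m)$. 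Averaging over $\sigma$ then yields a $\sigma^\star$, hence a $\operatorname{G}\in\Gcal$, attaining the bound. The same conclusion also follows from a two-point argument on $\pm$ flips, which is where the constant $a^2/4$ naturally appears: $\max\ge\frac14(2\,|\langle v,\indicator\rangle|)^2$ halved.

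The main obstacle is the order of quantifiers. A single $\operatorname{G}$ cannot defeat every estimator literally, since the constant estimator $\widehat{\operatorname{F}}_n\equiv\operatorname{G}$ has zero error. So I would make precise that the conclusion holds in the sense that the hard $\operatorname{G}$ lies in a class $\Gcal$ fixed in advance, independent of $N_1,m,n$ and $\mu$. The averaged bound holds uniformly over all grids and estimators simultaneously for the random $\operatorname{G}_\sigma$, and this is what I would present as the core statement. I would then derive the existence of the bad $\operatorname{G}\in\Gcal$ for each estimator as the corollary.
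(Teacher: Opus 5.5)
Your proposal is correct and is essentially the paper's proof. Both construct a rank-one kernel with independent random signs planted on the rationals, recover those signs exactly on $\Ycal_{\mathrm{train}}$ from one sample for (i), and for (ii) kill the cross term at the unseen test points before applying the probabilistic method. The only substantive difference is that you use the functional $\langle v,\indicator\rangle$ where the paper uses $\int_0^1 x\,v(x)\,dx$, which gives you the stronger constant $a^2$. The paper's $a^2/4$ comes from $\int_0^1 x\,dx=\tfrac12$, not from a two-point argument, so you can drop that aside. Your two caveats are also on target. The paper's proof silently uses $v\ge a$ rather than $|v|\ge a$. What it actually establishes is the quantifier order you describe: for each estimator and each grid pair, some $\operatorname{G}_{\xi^\star}\in\Gcal$ incurs the stated error.
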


Part (i) shows that the problem is trivially learnable when the operator is evaluated on the same grid it was trained on. However, the lower bound in part (ii) remains non-zero for any $ m > 1 $ even with infinitely many samples ($n\to \infty$), implying that zero-shot super-resolution is impossible in this setting. Importantly, our lower bound does not rely on any structural assumptions about the estimator $ \widehat{\operatorname{F}}_n $. In particular, the estimator need not be linear, unlike the ground truth operator $ \operatorname{G} $. As a result, the bound applies even to highly expressive nonlinear models, including neural operators such as Fourier Neural Operators (FNOs). 

The proof of Theorem \ref{thm:lb-general} is provided in Appendix \ref{appdx:proof_lb}.  We briefly outline the proof strategy. The construction defines a family of
rank-one operators that agree on the coarse training grid but may differ
arbitrarily on points in $\Ycal_{\mathrm{test}}\setminus \Ycal_{\mathrm{train}}$.
Because the outputs are observed only on $\Ycal_{\mathrm{train}}$, a single sample
is sufficient to identify the operator on the training grid, yielding zero
error in part (i). However, the observations provide no information about the
operator values at unseen test points. Thus, there exist multiple
operators that are indistinguishable from the training data yet disagree on
$\Ycal_{\mathrm{test}}\setminus \Ycal_{\mathrm{train}}$. A minimax argument then shows
that any estimator must incur a non-vanishing error on the finer grid,
establishing the lower bound in part (ii). 

The lower bound highlights that the fundamental obstacle is the absence of
regularity in the output functions. Without continuity or smoothness
assumptions, the values of the output function between observed grid points
can vary arbitrarily, making extrapolation to unseen locations impossible.
This observation motivates the regularity assumptions introduced in the next
section.

\section{A Generalization Bound for Zero-Shot Super-Resolution}
\label{sec:upper}


Given Theorem~\ref{thm:lb-general}, a natural next question is whether zero-shot super-resolution becomes possible under additional regularity assumptions. We show that this is indeed the case under suitable continuity assumptions on the output functions.
Intuitively, without continuity, the output functions can vary arbitrarily between observed grid points, making any attempt to infer their values at unseen locations fundamentally ill-posed. This observation motivates the question of whether continuity of the output functions is sufficient to guarantee zero-shot super-resolution. While continuity is indeed sufficient, it is a qualitative property and does not yield quantitative error bounds. To obtain meaningful rates, we therefore impose a stronger regularity assumption. Specifically, we assume that the output functions are uniformly Hölder continuous and derive quantitative generalization bounds for zero-shot super-resolution under this assumption.

Recall that a function $ w: \Ycal \to \reals $ is said to be uniformly Hölder continuous with exponent $ \alpha \in (0,1] $ on $ \Ycal $ if
\[
[w]_{\alpha} := \sup_{\substack{y_1, y_2 \in \Ycal \\ y_1 \neq y_2}} \frac{|w(y_1) - w(y_2)|}{|y_1 - y_2|_2^{\alpha}} < \infty.
\]
Let $ C^{0,\alpha}(\Ycal) $ denote the space of real-valued functions on $ \Ycal $ with finite H{\" o}lder exponent. In this section, we consider the case where the output space $ \Wcal $ is constrained to functions that are uniformly Hölder continuous for some constant $ c > 0 $
\[
\Wcal \subseteq \left\{ w \in C^{0,\alpha}(\Ycal) \,:\, [w]_{\alpha} \leq c \right\}.
\]
 This regularity assumption allows us the learner to extrapolate between grid points in the output domain $\Ycal$.

In addition to regularity assumptions on the output functions, it is also necessary to ensure that the training and testing grids are sufficiently close for meaningful extrapolation to be possible. For example, consider a setting in which the training grid is the uniform discretization of $[0,1)$ with $N$ points, $
\Ycal_{\mathrm{train}} = \left\{ 0, \tfrac{1}{N}, \ldots, \tfrac{N-1}{N} \right\},$
while the testing grid is given by $
\Ycal_{\mathrm{test}} = \Ycal_{\mathrm{train}} \cup (\Ycal_{\mathrm{train}} + 5),$
that is, the union of the same uniform grid on $[0,1)$ and a shifted copy on $[5,6)$. Although $\Ycal_{\mathrm{train}} \subseteq \Ycal_{\mathrm{test}}$, it is unreasonable to expect any meaningful extrapolation to the additional test points in $[5,6)$ based solely on observations from $[0,1)$. Such pathological cases typically do not arise in existing empirical studies, which usually assume that both $\Ycal_{\mathrm{train}}$ and $\Ycal_{\mathrm{test}}$ are uniform discretizations of the same domain, with the testing grid being a refinement of the training grid. However, since our goal is to establish results at a higher level of generality, potentially accommodating non-uniform or even arbitrary grids, we introduce quantitative notions of similarity between the training and testing grids that are sufficient to allow meaningful extrapolation.

\begin{definition}\label{def:grid_params}
Let $ \Ycal_{\mathrm{train}}, \Ycal_{\mathrm{test}} \subset \Ycal \subseteq \mathbb{R}^d $ be finite sets.

\begin{itemize}
  \item[\emph{(i)}] The \emph{coverage} between the training and testing grids is defined as
  \[
  \beta := \max_{y \in \Ycal_{\mathrm{test}}} \min_{y' \in \Ycal_{\mathrm{train}}} |y - y'|_2.
  \]

  \item[\emph{(ii)}] The \emph{load-balancing factor} is defined as
  \[
  \nu := \max_{z \in \Ycal_{\mathrm{train}}} \sum_{y \in \Ycal_{\mathrm{test}}} \mathbbm{1} \left\{ \mathrm{nn}(y) = z \right\},
  \]
  where $ \mathrm{nn}(y) \in \Ycal_{\mathrm{train}} $ denotes the nearest neighbor of $ y $, with deterministic tie-breaking chosen to minimize $ \nu $.
\end{itemize}
\end{definition}

The quantity $\beta$ is the one-sided Hausdorff distance from the testing grid to the training grid and quantifies how well the testing domain is covered by the training grid. The load-balancing factor $\nu$  quantifies the maximum number of testing points whose predictions rely on a single training
point. Intuitively, larger values of $\nu$ place a greater burden on individual training points to represent multiple unseen testing locations, increasing the risk of extrapolation error.

Given these quantities, the following result establishes a bound on the zero-shot super-resolution generalization error.

\begin{theorem}[Zero-Shot Super-Resolution Generalization Bound]
\label{thm:ub}
Let $ \operatorname{G} $ denote the ground truth operator, and let $ \widehat{\operatorname{F}}_n $ be any estimator trained on $ n $ samples $ \{(v_i, w_i)\}_{i=1}^n $, where each $ w_i = \operatorname{G}(v_i) $ is observed only on a discrete training grid $ \Ycal_{\emph{train}} \subseteq \Ycal $.  Assume that there exist $ \alpha \in (0,1] $ and $ c > 0 $ such that for all $ v $, both $ \operatorname{G}(v) $ and $ \widehat{\operatorname{F}}_n(v) $ belong to the Hölder class $ C^{0,\alpha}(\Ycal) $ with seminorm at most $ c $. Let $ \Ycal_{\mathrm{test}} \supseteq \Ycal_{\mathrm{train}} $ be a test grid, and let $\beta$ and $\nu$ be as defined in Definition~\ref{def:grid_params}. Then,
\begin{equation*}
\begin{split}
\mathcal{E}_{\mu}(\widehat{\operatorname{F}}_n, \operatorname{G}, \Ycal_{\emph{test}}) 
\leq 
\frac{|\Ycal_{\emph{train}}|}{|\Ycal_{\emph{test}}|} \cdot (2\nu-1)\cdot 
\mathcal{E}_{\mu}(\widehat{\operatorname{F}}_n, \operatorname{G}, \Ycal_{\emph{train}}) + \left(1-\frac{|\Ycal_{\mathrm{train}}|}{|\Ycal_{\mathrm{test}}|}\right) \cdot 8c^2 \cdot \beta^{2\alpha}.
\end{split}
\end{equation*}
\end{theorem}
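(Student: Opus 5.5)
The plan is to work pointwise in $v$ and only take the expectation over $\mu$ at the very end. Fix $v$ and define the error function $e(y) := \widehat{\operatorname{F}}_n(v)(y) - \operatorname{G}(v)(y)$ on $\Ycal$. Since both $\widehat{\operatorname{F}}_n(v)$ and $\operatorname{G}(v)$ have Hölder seminorm at most $c$, the triangle inequality gives $[e]_\alpha \le 2c$. The idea is to control $e$ at each new test point by its value at that point's nearest training point, plus a Hölder correction. The grid parameters $\beta$ and $\nu$ from Definition~\ref{def:grid_params} then bound the correction size and the number of repetitions, respectively.

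\textbf{Step 1: split the test sum.} Write
\[
\sum_{y \in \Ycal_{\mathrm{test}}} e(y)^2 = \sum_{z \in \Ycal_{\mathrm{train}}} e(z)^2 + \sum_{y \in \Ycal_{\mathrm{test}} \setminus \Ycal_{\mathrm{train}}} e(y)^2 .
\]
Take any $y \in \Ycal_{\mathrm{test}}\setminus\Ycal_{\mathrm{train}}$. By definition of $\beta$, we have $|y - \mathrm{nn}(y)|_2 \le \beta$. Hence $|e(y)| \le |e(\mathrm{nn}(y))| + 2c\,\beta^\alpha$. Applying $(s+t)^2 \le 2s^2 + 2t^2$ then gives
\[
e(y)^2 \le 2\,e(\mathrm{nn}(y))^2 + 8c^2\beta^{2\alpha}.
\]

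\textbf{Step 2: regroup by nearest neighbor.} This counting step is the main point requiring care. Every training point $z$ lies in $\Ycal_{\mathrm{test}}$, and its nearest neighbor in $\Ycal_{\mathrm{train}}$ is itself, at distance $0$ and with no tie. So $z$ accounts for one of the at most $\nu$ test points with $\mathrm{nn}(y) = z$. Consequently, at most $\nu - 1$ points of $\Ycal_{\mathrm{test}}\setminus\Ycal_{\mathrm{train}}$ are assigned to each $z$. Summing the Step 1 bound over these points yields
\[
\sum_{y \in \Ycal_{\mathrm{test}} \setminus \Ycal_{\mathrm{train}}} e(y)^2 \le 2(\nu-1)\sum_{z\in\Ycal_{\mathrm{train}}} e(z)^2 + \bigl(|\Ycal_{\mathrm{test}}| - |\Ycal_{\mathrm{train}}|\bigr)\, 8c^2\beta^{2\alpha}.
\]
Adding back the training-point sum, the coefficient of $\sum_{z} e(z)^2$ becomes $1 + 2(\nu - 1) = 2\nu - 1$.

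\textbf{Step 3: normalize and take expectations.} Divide by $|\Ycal_{\mathrm{test}}|$ and write $\sum_{z} e(z)^2 = |\Ycal_{\mathrm{train}}| \cdot \frac{1}{|\Ycal_{\mathrm{train}}|}\sum_{z} e(z)^2$. This produces exactly the factor $\frac{|\Ycal_{\mathrm{train}}|}{|\Ycal_{\mathrm{test}}|}(2\nu-1)$ in front of the normalized training error. It also produces the factor $\bigl(1 - \frac{|\Ycal_{\mathrm{train}}|}{|\Ycal_{\mathrm{test}}|}\bigr)\,8c^2\beta^{2\alpha}$, which does not depend on $v$. Taking $\expect_{v\sim\mu}$ of both sides gives the stated inequality.
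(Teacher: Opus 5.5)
Your proof is correct and follows essentially the same route as the paper's. Both arguments split off the training points, bound each new test point's error by its nearest training neighbor's error plus a H\"older correction via $(s+t)^2 \le 2s^2+2t^2$, and regroup by nearest neighbor so each training point absorbs at most $\nu-1$ new points, which gives the coefficient $2\nu-1$. Working pointwise in $v$ with the combined error $e$ (seminorm at most $2c$) is a cosmetic difference. Using the same tie-breaking map $\mathrm{nn}$ as in Definition~\ref{def:grid_params} is, if anything, slightly cleaner than the paper's ``ties may be broken arbitrarily,'' because $\nu$ is defined with respect to that specific assignment.
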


This bound decomposes the expected error on the higher-resolution test grid into two components: a statistical term and an extrapolation term. The first term corresponds to the statistical error of a finite-dimensional vector-valued regression problem. Since the outputs are observed only on the discrete grid $\Ycal_{\mathrm{train}}$, the learning task reduces to estimating a map into $\mathbb{R}^{|\Ycal_{\mathrm{train}}|}$. Consequently, this term can be analyzed using standard tools from statistical learning theory, including Rademacher complexity and covering number bounds. The second term captures the error due to extrapolation to unseen grid points, which is the primary quantity of interest for us. We therefore concentrate on this term for the remainder of the analysis.

The second term captures the error incurred when extrapolating from the training grid to unseen points on the test grid. In the special case where $\Ycal_{\mathrm{train}} = \Ycal_{\mathrm{test}}$, there is no extrapolation. In this case, the second term vanishes and $\nu = 1$, so the bound reduces to $
\mathcal{E}_{\mu}(\widehat{\operatorname{F}}_n, \operatorname{G}, \Ycal_{\mathrm{train}}),$
which recovers the expected error on the training grid. More generally, suppose that 
\[
\mathcal{E}_{\mu}(\widehat{\operatorname{F}}_n, \operatorname{G}, \Ycal_{\mathrm{train}}) \to 0
\quad \text{as } n \to \infty.
\]
Then, for any $\varepsilon > 0$, there exist $n_0 \in \naturals$ and $\beta_0 > 0$ such that for all $n \ge n_0$ and all grids satisfying $\beta \le \beta_0$,
\[
\mathcal{E}_{\mu}(\widehat{\operatorname{F}}_n, \operatorname{G}, \Ycal_{\mathrm{test}}) \le \varepsilon.
\]
This shows that zero-shot super-resolution is achievable with sufficiently large sample size and sufficiently fine training grid.

The full proof of Theorem~\ref{thm:ub} is deferred to Appendix~\ref{appdx:ub}. We briefly outline
the main idea. For each test point, we compare its prediction error to that
of its nearest neighbor in the training grid. This yields a decomposition
into a training-grid error term and an extrapolation term. The main technical
challenge is that the training-grid error is controlled only on average,
whereas the nearest-neighbor argument requires relating this average error to
errors at individual training points. The load-balancing factor $\nu$
quantifies how many testing points may depend on the same training point,
while H\"older continuity controls the extrapolation error through the
coverage parameter $\beta$. Combining these ingredients yields the stated
bound.


\vspace{0.3cm}
\noindent \textbf{On the necessity of $\nu$.} The quantity $\nu$ is introduced to rule out pathological scenarios in which a small number of
training points are responsible for extrapolating to a large number of testing points.
To illustrate this, consider the case where $\Ycal = [0,1]$ and the training grid is $
\Ycal_{\mathrm{train}} = \left\{ 0, \tfrac{1}{N}, \tfrac{2}{N}, \ldots, \tfrac{1}{2}, 1 \right\}$. Note that there are no other points between $1/2$ and $1$.
Suppose the testing grid satisfies $
\Ycal_{\mathrm{test}} \subseteq \Ycal_{\mathrm{train}} \cup \Ycal_{\mathrm{new}},$
where the set of new testing points $\Ycal_{\mathrm{new}} \subseteq [1-\varepsilon, 1]$ lies near
the endpoint for some small $\varepsilon > 0$. In this setting, the coverage $\beta$ is at most $\max\{\varepsilon, 1/N\}$, \emph{a good geometric
coverage}. However, suppose the estimator performs well at all points in $\Ycal_{\mathrm{train}}$ except at
the endpoint $y = 1$. When $N$ is large, the error at this single point contributes negligibly to
the average error on the training grid. Yet all points in $\Ycal_{\mathrm{new}}$ have $y = 1$ as their nearest
neighbor in the training grid, so extrapolation to the unseen region depends almost entirely on
this poorly estimated point. In such cases, reliable extrapolation is unrealistic. The quantity
$\nu$ captures this phenomenon by measuring how many testing points rely on each training point.
In the example above, $\nu = |\Ycal_{\mathrm{new}}|$, which can be arbitrarily large depending on
the number of unseen testing points. This illustrates why controlling $\nu$ is necessary for
meaningfully bounding the extrapolation error.

\subsection{Specializing Theorem~\ref{thm:ub} to Uniform Grids}

While Theorem~\ref{thm:ub} is stated under general assumptions, it is helpful to study its implications in more a concrete setting. We therefore specialize the bound to uniform grids over $[0,1)^d$. The proof is deferred to Appendix~\ref{appdx:proof_corr}.

\begin{corollary}
\label{cor:uniform} 
Under the assumptions of Theorem \ref{thm:ub}, suppose $ \Ycal_{\emph{train}} $ and $ \Ycal_{\emph{test}} $ are uniform grids over $ [0,1)^d $ with resolutions $ N_1 $ and $ N_2 $ respectively along each direction, such that $N_2 = m N_1$ for some $m \in \naturals$.
Then, the expected test error satisfies
\[
\mathcal{E}_{\mu}(\widehat{\operatorname{F}}_n, \operatorname{G}, \Ycal_{\emph{test}}) \leq 2^{d+1}\, \mathcal{E}_\mu(\widehat{\operatorname{F}}_n, \operatorname{G}, \Ycal_{\mathrm{train}}) + \left( 1-\frac{1}{m^d}\right) \cdot 8c^2\cdot  \left(\frac{\sqrt{d}}{N_1} \right)^{2\alpha}.
\]
\end{corollary}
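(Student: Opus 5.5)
The plan is to apply Theorem~\ref{thm:ub} directly and then compute the three grid quantities $|\Ycal_{\mathrm{train}}|/|\Ycal_{\mathrm{test}}|$, $\beta$, and $\nu$ for the uniform grids. Since $\Ycal_{\mathrm{train}} = \{0, \tfrac{1}{N_1}, \ldots, \tfrac{N_1-1}{N_1}\}^d$ and $\Ycal_{\mathrm{test}} = \{0, \tfrac{1}{N_2}, \ldots, \tfrac{N_2-1}{N_2}\}^d$ with $N_2 = mN_1$, we have $\Ycal_{\mathrm{train}} \subseteq \Ycal_{\mathrm{test}}$ and $|\Ycal_{\mathrm{train}}|/|\Ycal_{\mathrm{test}}| = N_1^d/N_2^d = 1/m^d$. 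This gives the factor $(1 - 1/m^d)$ in the extrapolation term immediately.

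For the coverage, take any $y \in \Ycal_{\mathrm{test}}$. Each coordinate $y_j$ lies in some cell $[k/N_1, (k+1)/N_1)$. Rounding each coordinate down to $k/N_1$ gives a training point at coordinatewise distance less than $1/N_1$, so $\beta \le \sqrt{d}/N_1$. The bound is crude, since rounding to the nearest lattice point would give $\sqrt{d}/(2N_1)$. Near the upper boundary, rounding up may leave $[0,1)^d$, so I will rely on the safe round-down bound. Plugging in gives $8c^2\beta^{2\alpha} \le 8c^2 (\sqrt d/N_1)^{2\alpha}$.

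The main step is bounding $\nu$, which uses the freedom in tie-breaking. The target is $\frac{1}{m^d}(2\nu - 1) \le 2^{d+1}$, so it suffices to show $\nu \le 2^d m^d$. Any assignment from test points to training points works, since $\nu$ is defined with the tie-breaking that minimizes it. One subtlety is that the proof of Theorem~\ref{thm:ub} presumably uses a nearest-neighbor map. I will therefore use genuine nearest neighbors and only control multiplicities.

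To do this, fix a training point $z$. Any test point $y$ with $\mathrm{nn}(y) = z$ satisfies $|y_j - z_j| \le 1/(2N_1)$ in every coordinate. Otherwise shifting coordinate $j$ by $\pm 1/N_1$ would give a strictly closer training point. That neighbor exists unless $z_j$ sits at the boundary value $(N_1-1)/N_1$ with $y_j > z_j$, and in that case $y_j - z_j < 1/N_1$ anyway. In either case $y_j$ lies in an interval of length at most $1/N_1$, closed at the endpoints, which contains at most $m+1 \le 2m$ test coordinates. Hence $\nu \le (2m)^d = 2^d m^d$.

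Then $\frac{1}{m^d}(2\nu - 1) \le \frac{2 \cdot 2^d m^d}{m^d} = 2^{d+1}$, and the corollary follows. The obstacle I anticipate is exactly this per-coordinate counting near ties and boundaries. I deliberately take the lossy bound $m+1 \le 2m$ instead of a sharp tie-breaking argument, which would give $\nu = m^d$.
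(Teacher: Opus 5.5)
Your route is the paper's: apply Theorem~\ref{thm:ub} with $|\Ycal_{\mathrm{train}}|/|\Ycal_{\mathrm{test}}| = m^{-d}$ and $\beta \le \sqrt{d}/N_1$, and bound $\nu$ by a coordinatewise count. The paper fixes round-to-nearest with ties broken downward and gets $\nu \le (1.5m)^d$. Your count does not depend on tie-breaking and gives $(2m)^d$. Both lead to the constant $2^{d+1}$. Your last line correctly keeps the factor $m^{-d}$ in front of $2\nu-1$; the paper's final display drops it.

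One step is misstated, but the conclusion survives. Take the boundary value $z_j = (N_1-1)/N_1$ with $N_1 \ge 2$. Your two cases are not alternatives there: the same $z$ attracts test coordinates from both sides. So the admissible $y_j$ fill $[z_j - \tfrac{1}{2N_1}, 1)$, an interval of length $\tfrac{3}{2N_1}$, not at most $\tfrac{1}{N_1}$. It contains up to $\lfloor 3m/2 \rfloor$ test coordinates, which exceeds your claimed $m+1$ once $m \ge 4$. Since $\lfloor 3m/2 \rfloor \le 2m$, the bound $\nu \le (2m)^d$ and the corollary still hold; this is exactly the paper's $1.5m$ edge case.

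The same observation refutes your closing remark. Test coordinates strictly inside $(z_j - \tfrac{1}{2N_1}, 1)$ involve no ties, and for $m \ge 3$ there are more than $m$ of them. So no nearest-neighbor tie-breaking gives $\nu = m^d$ in general. The $m^d$ count in Appendix~\ref{appdx:refined} comes from the floor assignment, which the paper explicitly notes is not a nearest-neighbor map.
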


\noindent Corollary~\ref{cor:uniform} makes the dependence on the grid resolution explicit. 
The extrapolation error decays as $N_1^{-2\alpha}$, showing that finer training grids improve zero-shot super-resolution. 
Crucially, this term depends only on the training resolution $N_1$ and is independent of the test grid refinement factor $m$.
In particular, the error does not degrade as the test resolution increases, indicating that the training resolution is the fundamental bottleneck for zero-shot super-resolution.

The $2^{d+1}$ dependence in Corollary~\ref{cor:uniform} is somewhat loose, as it arises from applying
the general bound in Theorem~\ref{thm:ub}, which does not exploit the additional regularity of uniform grids. For
uniform grids, the analysis can be refined to obtain the sharper bound
\[
\mathcal{E}_\mu(\widehat{\operatorname{F}}_n, \operatorname{G}, \Ycal_{\mathrm{test}})
\le
2\, \mathcal{E}_\mu(\widehat{\operatorname{F}}_n, \operatorname{G}, \Ycal_{\mathrm{train}})
+
\left( 1 - \frac{1}{m^d} \right)
8c^2 \left( \frac{\sqrt{d}}{N_1} \right)^{2\alpha}.
\]
This refinement is obtained simply by re-deriving the proof of Theorem \ref{thm:ub} under the additional
structure imposed by uniform grids. Since the argument closely parallels the general case, we
provide only a proof sketch in Appendix~\ref{appdx:refined}. Observing that $1 - 1/m^d \le 1$, the
bound further simplifies to
\[
\mathcal{E}_\mu(\widehat{\operatorname{F}}_n, \operatorname{G}, \Ycal_{\mathrm{test}})
\le
2\, \mathcal{E}_\mu(\widehat{\operatorname{F}}_n, \operatorname{G}, \Ycal_{\mathrm{train}})
+
8c^2 d^{\alpha} N_1^{-2\alpha}.
\]

\subsection{Non-uniform Grids}

In many applications, the grids used in practice are not exactly uniform. In full generality, deriving meaningful generalization bounds for arbitrary non-uniform grids is challenging, as cross-resolution behavior depends on the relative geometry of the training and testing grids. We therefore introduce a simple condition that directly controls this joint property. 

Consider a nested sequence of grids
\[
\Ycal_1 \subseteq \Ycal_2 \subseteq \cdots \subseteq \Ycal_k \subseteq \cdots \subseteq \Ycal,
\]
where $\Ycal \subseteq \mathbb{R}^d$ is the underlying domain. Define the fill distance
\[
h_k := \sup_{y \in \Ycal} \min_{z \in \Ycal_k} |y - z|_2
\]
and the separation distance
\[
q_k := \tfrac{1}{2} \min_{\substack{z, z' \in \Ycal_k \\ z \neq z'}} |z - z'|_2.
\]

Let $\Ycal_k$ denote the training grid and $\Ycal_\ell$ the testing grid. For nested grids, the coverage parameter satisfies
\[
\beta_{k,\ell} \le h_k.
\]
Moreover, a packing argument (see Appendix \ref{appdx:nonuniform}) yields
\[
\nu_{k,\ell}
\le
\left(1+\frac{h_k}{q_\ell}\right)^d.
\]

Consequently, if there exists a constant $C > 0$ such that
\[
\frac{h_k}{q_\ell} \le C,
\]
then the load-balancing factor $\nu_{k,\ell}$ is uniformly bounded. In this case, Theorem~\ref{thm:ub} implies
\[
\mathcal{E}_\mu(\widehat{\operatorname{F}}_n, \operatorname{G}, \Ycal_\ell)
\;\lesssim\;
\mathcal{E}_\mu(\widehat{\operatorname{F}}_n, \operatorname{G}, \Ycal_k)
+
h_k^{2\alpha}
\qquad \text{for all } \ell \ge k.
\]

Thus, controlling the ratio $h_k/q_\ell$ is sufficient to ensure that both the coverage and load-balancing parameters remain well-behaved, leading to stable zero-shot super-resolution.

Since the result is expressed in terms of the geometric quantities $h_k$ and $q_\ell$, one might wonder whether this is merely a reformulation of the $(\beta, \nu)$ abstraction. However, there is an important distinction. The parameters $\beta$ and $\nu$ are properties of the pair $(\Ycal_{\text{train}}, \Ycal_{\text{test}})$ and explicitly capture the interaction between the two grids. In contrast, $h_k$ and $q_\ell$ are intrinsic properties of the individual grids themselves and can be analyzed independently. Moreover, these quantities are classical objects in approximation theory and scattered data analysis. In particular, grids satisfying $
q_k \le h_k \le B q_k$
for some constant $B > 0$ are known as quasi-uniform grids \citep[Chapter4]{wendland2004scattered}, with the special case $B = 1$ corresponding to uniform grids.

\section{On the Assumption of $\Wcal \subseteq C^{0, \alpha}(\Ycal)$}
In this section, we examine when the assumption that the output functions are H{\" o}lder continuous is
reasonable. We first draw on classical PDE regularity theory to justify settings in which the
ground-truth functions are H{\" o}lder smooth. We then establish conditions under which common
operator-learning models produce H{\" o}lder-continuous predictions.

\subsection{H{\" o}lder Smoothness of Ground Truth Output Functions}
Consider a linear PDE of the form
\[
\operatorname{L} u = f,
\]
where $ u, f : \Omega \to \mathbb{R} $, $ \Omega \subseteq \mathbb{R}^d $ is a bounded domain, and $ u $ satisfies homogeneous boundary conditions. The goal is to learn the solution operator $ \operatorname{G} $, which maps the input $ f $ (typically representing system specifications) to the corresponding solution $ u $. Since $ \operatorname{G} $ is  a partial inverse to the differential operator $ \operatorname{L} $, it can be expressed as an integral operator. In particular,
\[
u(y) = (\operatorname{G}f)(y) = \int_{\Omega} g(y,x)\, f(x)\, dx,
\]
where $ g : \Omega \times \Omega \to \mathbb{R} $ is the Green’s function associated with $ \operatorname{L} $ \citep[Chapter 1]{hartmann2012green}. The following result states that the Hölder continuity of the Green's functions is sufficient to ensure that the solution $\operatorname{G}(f)$ is also Hölder continuous. 

\begin{proposition}\label{prop:green}
Suppose there exists $ c > 0 $ such that for almost every $ x \in \Omega $, the function $ y \mapsto g(y,x) $ belongs to $ C^{0,\alpha}(\Omega) $ with Hölder constant at most $ c $. Then, for every $ v \in L^2(\Omega) $, the output $ \operatorname{G}(v) $ lies in $ C^{0,\alpha}(\Omega) $.

\end{proposition}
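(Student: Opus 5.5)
The plan is to bound the difference $|\operatorname{G}(v)(y_1)-\operatorname{G}(v)(y_2)|$ directly from the integral representation. Fix $v \in L^2(\Omega)$ and $y_1 \neq y_2$ in $\Omega$. Linearity of the integral gives
\[
\operatorname{G}(v)(y_1)-\operatorname{G}(v)(y_2) = \int_{\Omega} \bigl(g(y_1,x)-g(y_2,x)\bigr)\, v(x)\, dx .
\]
The hypothesis holds for almost every $x$, so the integrand is bounded pointwise a.e.:
\[
|g(y_1,x)-g(y_2,x)|\,|v(x)| \le c\,|y_1-y_2|_2^{\alpha}\,|v(x)|.
\]
Integrating over $\Omega$ then gives $|\operatorname{G}(v)(y_1)-\operatorname{G}(v)(y_2)| \le c\,|y_1-y_2|_2^{\alpha}\,\|v\|_{L^1(\Omega)}$.

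Because $\Omega$ is bounded, Cauchy--Schwarz yields $\|v\|_{L^1(\Omega)} \le |\Omega|^{1/2}\|v\|_{L^2(\Omega)} < \infty$. Dividing by $|y_1-y_2|_2^\alpha$ and taking the supremum gives
\[
[\operatorname{G}(v)]_\alpha \le c\,|\Omega|^{1/2}\,\|v\|_{L^2(\Omega)},
\]
which is finite, so $\operatorname{G}(v) \in C^{0,\alpha}(\Omega)$.

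The main obstacle is measure-theoretic bookkeeping rather than the estimate itself.

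First, $\operatorname{G}(v)(y)$ must be well defined for every $y$, not just almost every $y$, so that pointwise differences make sense. The Hölder bound only shows that $x\mapsto g(y_1,x)-g(y_2,x)$ is integrable against $v$ once one of the two terms is. I would therefore assume, as is implicit in the Green's function representation, that $g(y_0,\cdot)\in L^2(\Omega)$ for some fixed $y_0$. Then $|g(y,x)| \le |g(y_0,x)| + c\,\mathrm{diam}(\Omega)^\alpha$ a.e., so $g(y,\cdot) \in L^2(\Omega)$ for every $y$, using boundedness of $\Omega$ again. This makes $\operatorname{G}(v)(y)$ finite for every $y$, and it also gives uniform boundedness of $\operatorname{G}(v)$ if one wants the full Hölder norm rather than just the seminorm.

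Second, the exceptional null set of $x$ in the hypothesis does not depend on $(y_1,y_2)$. The pointwise bound therefore holds simultaneously for all pairs, and no countable-union argument is needed.
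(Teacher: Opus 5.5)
Your proof is correct and takes essentially the same route as the paper: the uniform H\"older bound on the kernel plus Cauchy--Schwarz on the bounded domain yield $[\operatorname{G}(v)]_\alpha \le c\,\sqrt{\operatorname{vol}(\Omega)}\,\|v\|_{L^2}$. The paper applies Cauchy--Schwarz to the pair $\bigl(g(y_1,\cdot)-g(y_2,\cdot),\,v\bigr)$ before using the H\"older bound, whereas you use the bound pointwise first and then apply Cauchy--Schwarz to $|v|$ against $1$, which makes no difference. Your observation that some integrability of $g(y_0,\cdot)$ is needed for $\operatorname{G}(v)(y)$ to be defined at every $y$ is a legitimate point that the paper leaves implicit in the Green's-function representation.
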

\noindent The proof of Proposition \ref{prop:green} is deferred to Appendix \ref{appdx:proof_green}.   

Next, let us consider the case where the operator of interest is non-linear. To that end, a widely used benchmark PDE in operator learning is the equation
\[
(\operatorname{L}u)(x) := -\sum_{i=1}^d \sum_{j=1}^d \partial_i\big(a^{ij}(x) \, \partial_j u(x)\big),
\]
referred to as the Darcy flow equation in \cite{li2020fourier}. When the goal is to learn the mapping $ \operatorname{G}: f \mapsto u $, the solution can be expressed via an integral operator using the Green’s function. However, \cite{li2020fourier} instead considers the problem of learning the operator $ \operatorname{G}': a \mapsto u $. While $ \operatorname{G}' $ is not a linear operator anymore, regularity theory still provides useful information about $\operatorname{G}^{\prime}(a)$ when the coefficients $ a^{ij}(x) $ satisfy the uniform ellipticity condition. Precisely, if there exists $ \lambda > 0 $ such that
\[
\sum_{i,j=1}^d a^{ij}(x)\, \xi_i \xi_j \geq \lambda |\xi|^2 \quad \text{for all } \xi \in \mathbb{R}^d \text{ and a.e. } x \in \Omega,
\]
and if $ \|a^{ij}\|_{L^\infty} \leq \Lambda $ and $ f \in L^\infty(\Omega) $, then the classical De Giorgi–Nash–Moser theorem guarantees that the solution $ u $ is H{\" o}lder continuous in the interior of $ \Omega $. Thus, $ \operatorname{G}'(a) \in C^{0,\alpha}(\Omega) $, with the Hölder exponent depending only on $ \lambda $, $ \Lambda $, $ d $, and $ \|f\|_{L^\infty} $. For more details and a precise statement of the result, see \citep[Section 8.9]{gilbarg1977elliptic}.

The regularity of PDE solutions is a well-studied topic and typically depends on the specific structure of the equation. Since this is covered extensively in the literature, we refer the readers to standard references such the book by~\cite{evans2022partial}.

\vspace{0.3cm}

\noindent \textbf{H\"older vs.\ Sobolev Regularity.}
Much of PDE regularity theory is formulated in terms of weak solutions. That is, the solution $u$ may not be classically differentiable, but instead belongs to a Sobolev space $W^{k,p}(\Omega)$, meaning that all weak derivatives up to order $k$ lie in $L^p(\Omega)$.  When $k = 1$ and $p > d$, Morrey’s inequality \citep[Section~5.6.2]{evans2022partial} implies that such functions are H\"older continuous with exponent $\alpha = 1 - \frac{d}{p}$. More generally, Sobolev embedding theorems \citep[Section~5.6.3]{evans2022partial} ensure that $u \in C^{0,\alpha}(\Omega)$ for some $\alpha > 0$ whenever $k > d/p$. Thus, H\"older regularity assumptions can often be derived directly from Sobolev regularity via embedding results.

\subsection{H{\" o}lder Smoothness of Predicted Output Functions}

We next discuss conditions under which common operator-learning models produce
Hölder-continuous predictions.

\subsubsection{Linear Operators}

Consider a linear integral operator of the form
\[
v \mapsto \operatorname{F}(v),
\qquad
\text{where}
\qquad
\operatorname{F}(v)(y) = \int_{\Xcal} k(y,x)\, v(x)\, dx.
\]

\begin{proposition}
\label{prop:linear}
If $y \mapsto k(y,x) \in C^{0,\alpha}(\Ycal)$ for almost every $x \in \Xcal$ with Hölder constants
uniformly bounded in $x$, then $\operatorname{F}(v) \in C^{0,\alpha}(\Ycal)$ for every
$v \in L^2(\Xcal)$.
\end{proposition}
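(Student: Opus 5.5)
The plan is to bound the difference $\operatorname{F}(v)(y_1) - \operatorname{F}(v)(y_2)$ directly. The kernel's Hölder bound is moved inside the integral, and the remaining integral of $|v|$ is controlled by Cauchy--Schwarz, using that $\Xcal$ is bounded. Let $c$ denote a uniform bound on the Hölder seminorms $[k(\cdot,x)]_{\alpha}$, valid for all $x$ outside a null set $E \subseteq \Xcal$.

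Before estimating, I check that $\operatorname{F}(v)(y)$ is actually defined, i.e., that $x \mapsto k(y,x) v(x)$ is integrable for each $y$. I would take this as part of the standing assumption that $\operatorname{F}$ maps $L^2(\Xcal)$ into functions on $\Ycal$; it holds, for instance, if $k(y,\cdot) \in L^2(\Xcal)$ for some (and hence every) $y$. Indeed, for $x \notin E$ we have $|k(y,x)| \le |k(y_0,x)| + c\,\mathrm{diam}(\Ycal)^{\alpha}$. Since $\Ycal$ is bounded, square integrability of $k(\cdot,x)$ in $x$ at one point $y_0$ therefore transfers to every $y$.

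The main estimate goes as follows. Fix $y_1 \neq y_2$ in $\Ycal$. By linearity of the integral,
\[
|\operatorname{F}(v)(y_1) - \operatorname{F}(v)(y_2)| \le \int_{\Xcal} |k(y_1,x) - k(y_2,x)|\,|v(x)|\,dx \le c\,|y_1-y_2|_2^{\alpha} \int_{\Xcal} |v(x)|\,dx .
\]
The Hölder bound on the kernel is used pointwise for every $x \notin E$, and $E$ does not affect the integral because it has measure zero. Cauchy--Schwarz on the bounded domain then gives $\int_{\Xcal}|v| \le |\Xcal|^{1/2}\|v\|_{L^2(\Xcal)} < \infty$. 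Dividing by $|y_1-y_2|_2^{\alpha}$ and taking the supremum yields
\[
[\operatorname{F}(v)]_{\alpha} \le c\,|\Xcal|^{1/2}\,\|v\|_{L^2(\Xcal)},
\]
so $\operatorname{F}(v) \in C^{0,\alpha}(\Ycal)$. The same computation proves Proposition~\ref{prop:green}. If the input measure $\mu$ is supported on a bounded subset of $L^2(\Xcal)$, this also gives a uniform seminorm bound, which is the form needed to apply Theorem~\ref{thm:ub}.

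The only delicate point is the well-definedness and measurability issue in the first step. The Hölder estimate itself is routine once the null set is handled. If the statement is meant to include only the seminorm conclusion, then the bound in the second step alone suffices.
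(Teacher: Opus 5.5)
Your proof is correct and is the paper's argument: the paper proves Proposition~\ref{prop:green} and reuses that proof verbatim here. The only difference is that the paper applies Cauchy--Schwarz first and then the pointwise H\"older bound on the kernel, while you apply them in the reverse order; both give the same constant $c\sqrt{\mathrm{vol}(\Xcal)}\,\|v\|_{L^2(\Xcal)}$, and your remarks on well-definedness and the null set fill in details the paper leaves implicit.
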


\noindent
The proof is identical to that of Proposition \ref{prop:green} and is therefore deferred to
Appendix \ref{appdx:proof_green}.

\subsubsection{Neural Operators}

Let $\Xcal = \Ycal$. Then, for a given input function $v$, a single layer of a neural operator is defined as 
\[ \operatorname{N}(v)(y) = \sigma \Big(\int_{\mathcal{X}} k(y, x)\, v(x) \,dx + b(y) \Big) \]
Here,  $b: \mathcal{Y} \to \mathbb{R}$ is a bias function and $\sigma: \mathbb{R} \to \mathbb{R}$ is the relu activation function. These layers are composed sequentially to get a multilayer neural operator.  The following result establishes that, given an squared integrable function as input, each layer of neural operator produces H{\" o}lder smooth functions.

\begin{proposition}\label{prop:holder-NO}
Let $ b \in C^{0,\alpha}(\Ycal) $, and suppose that for each $ x \in \Xcal $, the function $ y \mapsto k(y, x) $ belongs to $ C^{0,\alpha}(\Ycal) $, with a Hölder coefficient uniformly bounded in $ x $. Then for every $ v \in L^2(\Ycal) $, we have $ \operatorname{N}(v) \in C^{0,\alpha}(\Ycal) $.

\end{proposition}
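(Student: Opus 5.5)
The plan is to split the layer into its pre-activation and the ReLU, and to use that $\sigma$ is $1$-Lipschitz. Define the pre-activation
\[
u(y) := \int_{\Xcal} k(y,x)\, v(x)\, dx + b(y),
\]
so that $\operatorname{N}(v) = \sigma \circ u$. By Proposition~\ref{prop:linear}, the integral term $y \mapsto \int_{\Xcal} k(y,x) v(x)\,dx$ lies in $C^{0,\alpha}(\Ycal)$. It helps to recall the one-line argument, since it also produces the constant. Let $c_k$ be the uniform Hölder bound on $y \mapsto k(y,x)$. For $y_1 \neq y_2$, Cauchy--Schwarz gives
\[
\Bigl|\int_{\Xcal} \bigl(k(y_1,x)-k(y_2,x)\bigr) v(x)\,dx\Bigr| \le c_k |y_1-y_2|_2^{\alpha} \int_{\Xcal} |v(x)|\,dx \le c_k |\Xcal|^{1/2} \|v\|_{L^2(\Xcal)} |y_1-y_2|_2^{\alpha}.
\]
This is finite because $\Xcal$ is bounded. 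Here one uses the hypothesis $v \in L^2$ (the statement writes $L^2(\Ycal)$, which is the same space since $\Xcal = \Ycal$).

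The next step is to add the bias. Since $b \in C^{0,\alpha}(\Ycal)$, the triangle inequality gives
\[
[u]_\alpha \le c_k |\Xcal|^{1/2}\|v\|_{L^2} + [b]_\alpha < \infty.
\]
Then apply the ReLU. Because $|\sigma(s)-\sigma(t)| \le |s-t|$ for all real $s,t$, we get
\[
|\operatorname{N}(v)(y_1)-\operatorname{N}(v)(y_2)| \le |u(y_1)-u(y_2)|,
\]
and hence $[\operatorname{N}(v)]_\alpha \le [u]_\alpha < \infty$.

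The one point that needs care is that the integral defining $u(y)$ actually exists for every $y$, and not just its differences. From the Hölder bound, for any fixed $y_0$,
\[
|k(y,x)| \le |k(y_0,x)| + c_k\,\mathrm{diam}(\Ycal)^\alpha .
\]
So integrability of $k(y,\cdot)v$ reduces to integrability at a single point $y_0$. I would take this as implicit in the definition of the layer, that is, assume $k(y_0,\cdot)\in L^2(\Xcal)$ for some $y_0$; otherwise the layer is not defined at all. With that granted, everything above is routine, and the resulting bound is $[\operatorname{N}(v)]_\alpha \le c_k|\Xcal|^{1/2}\|v\|_{L^2}+[b]_\alpha$.
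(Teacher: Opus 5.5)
Your proposal is correct and follows essentially the same route as the paper's proof. Both arguments use the $1$-Lipschitz property of the ReLU, the triangle inequality to separate the kernel and bias contributions, and the estimate $\int_{\Xcal}|v| \le \mathrm{vol}(\Xcal)^{1/2}\|v\|_{L^2}$, which gives the same constant $[k]\,\mathrm{vol}(\Xcal)^{1/2}\|v\|_{L^2}+[b]$; your extra remark that the integral is well defined at every $y$ once $k(y_0,\cdot)v$ is integrable for a single $y_0$ is a point the paper leaves implicit, and you handle it correctly.
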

\noindent We defer the proof of Proposition \ref{prop:holder-NO} to Appendix \ref{appdx:proof_prop_NO}.

Next, we show that a multilayer neural operator $v \mapsto \operatorname{F}(v)$, defined by
\[
\operatorname{F}(v)
=
\operatorname{N}_L \circ \operatorname{N}_{L-1} \circ \cdots \circ \operatorname{N}_1(v),
\]
also produces H{\" o}lder-continuous outputs. If each layer $\operatorname{N}_t$ is parameterized by
$(k_t, b_t)$ satisfying the assumptions of Proposition~\ref{prop:holder-NO}, then repeated
application of Proposition~\ref{prop:holder-NO} implies that
$\operatorname{F}(v) \in C^{0,\alpha}(\Ycal)$. To verify that these assumptions apply at every layer, it suffices to check that the output of each
intermediate layer lies in $L^2(\Ycal)$. Notice that the output of the first layer
is $v_1 := \operatorname{N}_1(v)$, which belongs to $C^{0,\alpha}(\Ycal)$ by
Proposition \ref{prop:holder-NO}. Since $\Ycal$ is bounded, we have
\[
\|v_1\|_{L^2(\Ycal)}^2
=
\int_{\Ycal} |v_1(y)|^2\, dy
\le
\sup_{y \in \Ycal} |v_1(y)|^2 \cdot \mathrm{vol}(\Ycal)
<
\infty.
\]
The final step uses the fact that a continuous function on a bounded domain is also bounded. Thus $v_1 \in L^2(\Ycal)$, and the same argument applies inductively to all subsequent layers.

We also note that, in some implementations of neural operators, a single layer is defined as
\[
\operatorname{N}(v)(y) = \sigma \left( A\,v(y) + \int_{\mathcal{X}} k(y, x)\, v(x)\, dx + b(y) \right),
\]
where $ A \in \mathbb{R} $ is a scalar parameters for scalar-valued functions. To derive an analog of Proposition~\ref{prop:holder-NO}, we must additionally assume that the input function $ v $ lies in $ C^{0,\alpha}(\Ycal) $, rather than merely being square-integrable. Under this assumption, the multilayer extension follows immediately, as each layer maps functions in $ C^{0,\alpha}(\Ycal) $ to the same space, preserving H{\" o}lder continuity throughout the network.

\medskip 

\noindent \textbf{Fourier Neural Operators.}
Finally, we conclude this section by considering the concrete example of the Fourier Neural Operator (FNO) \citep{li2020fourier}. In the FNO architecture, the kernel is defined via a truncated Fourier series. That is,
\[k_{\text{FNO}}(y,x) = \sum_{|m|_{\infty}\leq K} \lambda_m \, e^{2\pi \operatorname{i} m \cdot (y-x) },\]
where $\lambda_m$'s are the learned parameters. For any $y_1,y_2$, we have
\begin{equation*}
    \begin{split}
     |k_{\text{FNO}}(y_1,x)-k_{\text{FNO}}(y_2,x)| 
     &=   \left|\sum_{|m|_{\infty}\leq K} \lambda_m \, e^{2\pi \operatorname{i} m \cdot (y_1-x) }- \sum_{|m|_{\infty}\leq K} \lambda_m \, e^{2\pi \operatorname{i} m \cdot (y_2-x) } \right|\\
     &\leq \sum_{|m|_{\infty}\leq K} |\lambda_m|\cdot |e^{2\pi \operatorname{i}m\cdot x}| \cdot |e^{2\pi \operatorname{i} m \cdot y_1 }-e^{2\pi \operatorname{i} m \cdot y_2} |\\
    \end{split}
\end{equation*}
Note that $|e^{2\pi \operatorname{i}m\cdot x}| \leq 1$. Since the gradient of the function $y \mapsto e^{2\pi \operatorname{i}m\cdot y}$ is $ 2\pi \operatorname{i}\, m \, e^{2\pi \operatorname{i}m\cdot y}$, the mean value theorem implies that this function is Lipschitz with constant  $|2\pi \operatorname{i}\, m|_2 \leq 2\pi |m|_2$. Therefore, 
\[|k_{\text{FNO}}(y_1,x)-k_{\text{FNO}}(y_2,x)| \leq 2\pi \left( \sum_{|m|_{\infty}\leq K} |\lambda_m| |m|_2 \right) \, |y_1-y_2|_2. \]
In other words, $k_{\text{FNO}}$ is uniformly Lipschitz in $y$ with Lipschitz constant $2\pi \left( \sum_{|m|_{\infty}\leq K} |\lambda_m| |m|_2 \right)$. Therefore, Proposition \ref{prop:holder-NO} implies that the output of FNO under appropriate assumptions are H{\" o}lder continuous.

\section{When Inputs Are Available Only on a Discrete Grid of $\Xcal$}
\label{sec:discreteX}

Thus far, we have assumed that each input function is available on the full continuum during both
training and inference. In this section, we briefly discuss how our results extend to the more
realistic setting in which input functions are observed only on discrete grids.

Suppose that the input functions $v$ are accessible only on a discrete input grid
$\Xcal_{\mathrm{train}}$ during training, and on a possibly finer grid
$\Xcal_{\mathrm{test}} \supseteq \Xcal_{\mathrm{train}}$ at test time. We define the expected errors
when the estimator is evaluated on the training and testing grids as
\[
\Ecal_{\mu}\!\left(
\widehat{\operatorname{F}}_n, \operatorname{G},
\Xcal_{\mathrm{train}}, \Ycal_{\mathrm{train}}
\right)
:=
\expect_{v \sim \mu}
\left[
\frac{1}{|\Ycal_{\mathrm{train}}|}
\sum_{y \in \Ycal_{\mathrm{train}}}
\left(
\widehat{\operatorname{F}}_n\!\left(v_{\mid \Xcal_{\mathrm{train}}}\right)(y)
-
\operatorname{G}(v)(y)
\right)^2
\right],
\]
and
\[
\Ecal_{\mu}\!\left(
\widehat{\operatorname{F}}_n, \operatorname{G},
\Xcal_{\mathrm{test}}, \Ycal_{\mathrm{test}}
\right)
:=
\expect_{v \sim \mu}
\left[
\frac{1}{|\Ycal_{\mathrm{test}}|}
\sum_{y \in \Ycal_{\mathrm{test}}}
\left(
\widehat{\operatorname{F}}_n\!\left(v_{\mid \Xcal_{\mathrm{test}}}\right)(y)
-
\operatorname{G}(v)(y)
\right)^2
\right].
\]

\noindent Note that
$\Ecal_{\mu}(\widehat{\operatorname{F}}_n, \operatorname{G}, \Xcal_{\mathrm{train}}, \Ycal_{\mathrm{train}})$
is still a \emph{test-time} error, not the empirical training error on the training dataset. The
subscript ``$\mathrm{train}$'' indicates only that the estimator is evaluated on the same input and
output grids on which the training data were provided.

For zero-shot super-resolution in settings where input functions are accessible only on a discrete
grid $\Xcal$, we require that the estimator $\widehat{\operatorname{F}}_n$ satisfies a suitable form
of discretization invariance, following the notion introduced in
\citep{kovachki2021universal}. Specifically, assume there exists a nested sequence of input grids
$\{\Xcal_k\}_{k \in \mathbb{N}}$ such that $\Xcal_k \subseteq \Xcal_{k+1}$ and
$\lim_{k \to \infty} \Xcal_k = \Xcal$, for which the estimator satisfies
\[
\lim_{k \to \infty}
\expect_{v \sim \mu}
\left[
\frac{1}{|\Ycal_{\mathrm{test}}|}
\sum_{y \in \Ycal_{\mathrm{test}}}
\left(
\widehat{\operatorname{F}}_n(v)(y)
-
\widehat{\operatorname{F}}_n\!\left(v_{\mid \Xcal_k}\right)(y)
\right)^2
\right]
=
0.
\]
The notion of discretization invariance used here is defined with respect to the discrete output
grid $\Ycal_{\mathrm{test}}$, rather than the continuous $L^2(\Ycal)$ norm considered in
\citep{kovachki2023neural}. However, an inspection of the proof of Theorem~8 in
\citep{kovachki2023neural} shows that their argument first establishes uniform convergence at every
point $y \in \Ycal$ (see Equation~(50) therein), and then uses the boundedness of $\Ycal$ to extend
this pointwise result to convergence in the continuum norm. Since our setting requires convergence
only on a finite output grid, this weaker form of discretization invariance is sufficient. Finally, while the result of \cite{kovachki2023neural} holds uniformly over all inputs $v$ in a
compact subset $\Acal \subseteq \Vcal$, we formulate our condition in expectation with respect to a
distribution $\mu$. If $\mu$ is supported on a compact subset of $\Vcal$, their uniform guarantee
directly implies the condition above.

We now show that, in addition to the conditions in Section~\ref{sec:upper}, discretization
invariance is sufficient to guarantee zero-shot super-resolution when inputs are observed only on
discrete grids. Since discretization invariance is a qualitative property, it does not by itself
yield a convergence rate. To make this notion quantitative, we assume the existence of a sequence
$\{\varepsilon_k\}_{k \in \mathbb{N}}$ with $\varepsilon_k \to 0$ as $k \to \infty$ such that
\begin{equation}
\label{eq:disc-inv}
\expect_{v \sim \mu}
\left[
\frac{1}{|\Ycal_{\mathrm{test}}|}
\sum_{y \in \Ycal_{\mathrm{test}}}
\left(
\widehat{\operatorname{F}}_n(v)(y)
-
\widehat{\operatorname{F}}_n\!\left(v_{\mid \Xcal_k}\right)(y)
\right)^2
\right]
\le
\varepsilon_k.
\end{equation}
This condition provides a rate-controlled version of discretization invariance with respect to the
discrete output grid $\Ycal_{\mathrm{test}}$.

\medskip

We then obtain the following bound.
\begin{theorem}
\label{thm:ub-discreteinp}
Assume that $\Xcal_{\mathrm{train}} = \Xcal_{k_1}$ and $\Xcal_{\mathrm{test}} = \Xcal_{k_2}$ for some
$k_1, k_2 \in \naturals$. For any estimator $\widehat{\operatorname{F}}_n$ satisfying
\eqref{eq:disc-inv}, we have
\[
\begin{aligned}
\Ecal_{\mu} \left(
\widehat{\operatorname{F}}_n, \operatorname{G},
\Xcal_{\mathrm{test}}, \Ycal_{\mathrm{test}}
\right)
&\le
2(\varepsilon_{k_1} + \varepsilon_{k_2})
+
2 \frac{|\Ycal_{\mathrm{train}}|}{|\Ycal_{\mathrm{test}}|}
(2\nu - 1)\,
\Ecal_{\mu}\!\left(
\widehat{\operatorname{F}}_n, \operatorname{G},
\Xcal_{\mathrm{train}}, \Ycal_{\mathrm{train}}
\right)
\\
&\quad
+
16\left(
1 - \frac{|\Ycal_{\mathrm{train}}|}{|\Ycal_{\mathrm{test}}|}
\right)
c^2 \beta^{2\alpha}.
\end{aligned}
\]
\end{theorem}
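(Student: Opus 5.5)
The plan is to reduce to Theorem~\ref{thm:ub} via two triangle inequalities that pass through the continuum-input estimator $\widehat{\operatorname{F}}_n(v)$. The constants in the statement (a factor $2$ on the statistical term, $16 = 2\cdot 8$ on the extrapolation term, and $2(\varepsilon_{k_1}+\varepsilon_{k_2})$) indicate the intended route: apply $(a+b)^2 \le 2a^2 + 2b^2$ once at the test resolution, and once more (in the reverse direction) at the training resolution.

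\textbf{Step 1: pass from discrete to continuum inputs at test time.} For each $y \in \Ycal_{\mathrm{test}}$ write
\[
\widehat{\operatorname{F}}_n(v_{\mid \Xcal_{k_2}})(y) - \operatorname{G}(v)(y) = \bigl(\widehat{\operatorname{F}}_n(v_{\mid \Xcal_{k_2}})(y) - \widehat{\operatorname{F}}_n(v)(y)\bigr) + \bigl(\widehat{\operatorname{F}}_n(v)(y) - \operatorname{G}(v)(y)\bigr).
\]
Squaring, using $(a+b)^2\le 2a^2+2b^2$, averaging over $\Ycal_{\mathrm{test}}$ and taking expectations gives
\[
\Ecal_\mu(\widehat{\operatorname{F}}_n,\operatorname{G},\Xcal_{\mathrm{test}},\Ycal_{\mathrm{test}}) \le 2\varepsilon_{k_2} + 2\,\Ecal_\mu(\widehat{\operatorname{F}}_n,\operatorname{G},\Ycal_{\mathrm{test}}),
\]
where the first term is bounded by \eqref{eq:disc-inv} with $k=k_2$.

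\textbf{Step 2: apply Theorem~\ref{thm:ub}.} The continuum-input error $\Ecal_\mu(\widehat{\operatorname{F}}_n,\operatorname{G},\Ycal_{\mathrm{test}})$ is controlled by Theorem~\ref{thm:ub}, assuming its Hölder hypothesis holds for $\widehat{\operatorname{F}}_n(v)$ and $\operatorname{G}(v)$. Multiplying that bound by $2$ yields the $16(1-|\Ycal_{\mathrm{train}}|/|\Ycal_{\mathrm{test}}|)c^2\beta^{2\alpha}$ term, together with $2\frac{|\Ycal_{\mathrm{train}}|}{|\Ycal_{\mathrm{test}}|}(2\nu-1)\,\Ecal_\mu(\widehat{\operatorname{F}}_n,\operatorname{G},\Ycal_{\mathrm{train}})$.

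\textbf{Step 3: return to discrete inputs on the training grid.} It remains to bound the continuum-input training-grid error $\Ecal_\mu(\widehat{\operatorname{F}}_n,\operatorname{G},\Ycal_{\mathrm{train}})$ by the discrete-input quantity $\Ecal_\mu(\widehat{\operatorname{F}}_n,\operatorname{G},\Xcal_{\mathrm{train}},\Ycal_{\mathrm{train}})$ plus a discretization term involving $\varepsilon_{k_1}$. A triangle inequality through $\widehat{\operatorname{F}}_n(v_{\mid \Xcal_{k_1}})$ gives
\[
\Ecal_\mu(\widehat{\operatorname{F}}_n,\operatorname{G},\Ycal_{\mathrm{train}}) \le 2\,\Ecal_\mu(\widehat{\operatorname{F}}_n,\operatorname{G},\Xcal_{\mathrm{train}},\Ycal_{\mathrm{train}}) + 2D_{\mathrm{train}},
\]
where $D_{\mathrm{train}}$ is the input-discretization discrepancy averaged over $\Ycal_{\mathrm{train}}$. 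Since $\Ycal_{\mathrm{train}}\subseteq \Ycal_{\mathrm{test}}$, we have $D_{\mathrm{train}} \le \frac{|\Ycal_{\mathrm{test}}|}{|\Ycal_{\mathrm{train}}|}\,\varepsilon_{k_1}$. This ratio cancels against the prefactor $\frac{|\Ycal_{\mathrm{train}}|}{|\Ycal_{\mathrm{test}}|}$, leaving a term of order $(2\nu-1)\varepsilon_{k_1}$.

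\textbf{Main obstacle: the constants.} The stated bound has exactly $2\varepsilon_{k_1}$, with no $\nu$-dependence and with coefficient $2$ rather than $4$ on the training term. Naive chaining of Steps 1--3 instead produces $4(2\nu-1)\varepsilon_{k_1}$ and a coefficient $4$ on the training term. To obtain the stated constants, I would avoid chaining and rerun the nearest-neighbour argument from the proof of Theorem~\ref{thm:ub} directly. For each $y\in\Ycal_{\mathrm{test}}\setminus\Ycal_{\mathrm{train}}$, decompose the error via $\mathrm{nn}(y)$ into three pieces:
\begin{itemize}
\item the discrete-input training error at $\mathrm{nn}(y)$,
\item Hölder differences of $\widehat{\operatorname{F}}_n(v)$ and $\operatorname{G}(v)$ between $y$ and $\mathrm{nn}(y)$,
\item discretization discrepancies at $y$ and at $\mathrm{nn}(y)$.
\end{itemize}
The aim is to group these so that the $\varepsilon$ terms enter only through \eqref{eq:disc-inv}, averaged over $\Ycal_{\mathrm{test}}$ with coefficient $2$. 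If exact constants cannot be matched this way, I would settle for the chained bound, which has the same form up to absolute constants.
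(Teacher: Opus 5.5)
Your Steps 1--3 are valid, but chained together they prove a different and genuinely weaker inequality:
\[
\begin{aligned}
\Ecal_\mu(\widehat{\operatorname{F}}_n,\operatorname{G},\Xcal_{\mathrm{test}},\Ycal_{\mathrm{test}})
&\le 2\varepsilon_{k_2} + 4(2\nu-1)\,\varepsilon_{k_1}
+ 4\,\tfrac{|\Ycal_{\mathrm{train}}|}{|\Ycal_{\mathrm{test}}|}(2\nu-1)\,
\Ecal_\mu(\widehat{\operatorname{F}}_n,\operatorname{G},\Xcal_{\mathrm{train}},\Ycal_{\mathrm{train}})\\
&\quad + 16\bigl(1-\tfrac{|\Ycal_{\mathrm{train}}|}{|\Ycal_{\mathrm{test}}|}\bigr)c^2\beta^{2\alpha}.
\end{aligned}
\]
This is not the stated bound up to absolute constants, because $2\nu-1$ grows like $m^d$ on the uniform grids of Corollary~\ref{cor:uniform}. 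The loss enters at the step $D_{\mathrm{train}}\le \frac{|\Ycal_{\mathrm{test}}|}{|\Ycal_{\mathrm{train}}|}\varepsilon_{k_1}$, and your proposed refinement keeps it. As long as the H\"older step is done on the continuum-input prediction $\widehat{\operatorname{F}}_n(v)$, you get discretization discrepancies at $\mathrm{nn}(y)$. Summing these over $y$ gives $(\nu-1)$ times a sum over $\Ycal_{\mathrm{train}}$, and \eqref{eq:disc-inv} controls that sum only through an average over $\Ycal_{\mathrm{test}}$. In fact, if H\"older regularity is assumed only for $\widehat{\operatorname{F}}_n(v)$ and $\operatorname{G}(v)$, the loss is unavoidable. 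Take $\operatorname{G}(v)\equiv 0$, $\widehat{\operatorname{F}}_n(v)=\widehat{\operatorname{F}}_n(v_{\mid\Xcal_{k_2}})\equiv\delta$, and let $\widehat{\operatorname{F}}_n(v_{\mid\Xcal_{k_1}})$ be $0$ on $\Ycal_{\mathrm{train}}$ and $\delta$ elsewhere (this function is of course not H\"older). Then the training error and $\varepsilon_{k_2}$ vanish, $\varepsilon_{k_1}=\frac{|\Ycal_{\mathrm{train}}|}{|\Ycal_{\mathrm{test}}|}\delta^2$, and $c$ can be taken arbitrarily small, yet the test error is $\delta^2$.

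The missing idea, which is the paper's route, is to run the nearest-neighbour argument on the discrete-input predictor $v\mapsto\widehat{\operatorname{F}}_n(v_{\mid\Xcal_{\mathrm{train}}})$. The paper implicitly assumes its outputs are also $(\alpha,c)$-H\"older. Split once, at every $y\in\Ycal_{\mathrm{test}}$, through $\widehat{\operatorname{F}}_n(v_{\mid\Xcal_{\mathrm{train}}})(y)$. The piece $\widehat{\operatorname{F}}_n(v_{\mid\Xcal_{\mathrm{train}}})-\operatorname{G}(v)$ is handled verbatim by the proof of Theorem~\ref{thm:ub}, giving coefficient $2$ on the training term and $16$ on the extrapolation term. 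The piece $\widehat{\operatorname{F}}_n(v_{\mid\Xcal_{\mathrm{test}}})-\widehat{\operatorname{F}}_n(v_{\mid\Xcal_{\mathrm{train}}})$, averaged over all of $\Ycal_{\mathrm{test}}$, is at most $2\varepsilon_{k_1}+2\varepsilon_{k_2}$ by passing through $\widehat{\operatorname{F}}_n(v)$ pointwise. No $\nu$ appears anywhere.

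Second, do not try to reach the coefficient $2(\varepsilon_{k_1}+\varepsilon_{k_2})$. After the outer factor $2$, the paper's own proof ends with $4(\varepsilon_{k_1}+\varepsilon_{k_2})$, and the stated constant is actually false. Take $\Ycal_{\mathrm{train}}=\Ycal_{\mathrm{test}}$, so $\nu=1$ and $\beta=0$, with constant outputs $\widehat{\operatorname{F}}_n(v)\equiv 0$, $\widehat{\operatorname{F}}_n(v_{\mid\Xcal_{k_1}})\equiv -1$, $\widehat{\operatorname{F}}_n(v_{\mid\Xcal_{k_2}})\equiv 1$, $\operatorname{G}(v)\equiv -2$. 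Then $\varepsilon_{k_1}=\varepsilon_{k_2}=1$ and the training error is $1$. The stated right-hand side equals $6$, while the test error is $9$; with $4(\varepsilon_{k_1}+\varepsilon_{k_2})$ the right-hand side is $10$. So the correct target is the statement with $4(\varepsilon_{k_1}+\varepsilon_{k_2})$, with the H\"older hypothesis imposed on the discrete-input predictions, and the route above proves exactly that.
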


\noindent
In summary, Theorem \ref{thm:ub-discreteinp} shows that, beyond the conditions in
Section \ref{sec:upper}, discretization invariance of the estimator with respect to the discrete
output norm is sufficient to ensure zero-shot super-resolution when inputs are observed only on
discrete grids. The additional error terms $\varepsilon_{k_1}$ and $\varepsilon_{k_2}$ quantify the
loss incurred by approximating continuous inputs using discrete samples and vanish as the input
grids are refined. In particular, when these discretization errors are small and the coverage and
load-balancing conditions are satisfied, reliable cross-resolution generalization is achievable.

\section{Experiments}

In this section, we present empirical results illustrating the failure modes predicted by the
impossibility result in Theorem~\ref{thm:lb-general}. We demonstrate these failures in two simple
settings. First, we consider the synthetic ground-truth operator used in the
construction of our lower bound. Second, we study the inviscid Burgers equation, which is known to
produce irregular solutions and therefore poses challenges for cross-resolution generalization. As our model of choice, we use Fourier Neural Operators (FNOs), which have been shown to be among
the most effective architectures for zero-shot super-resolution across a wide range of empirical
tasks. Since the success of neural operators in zero-shot super-resolution has already been
demonstrated extensively in prior works
\citep{li2020fourier, jiang2023efficient, luo2024hierarchical, yang2024fourier, yasuda2025zero,
sinha2025effectiveness}, we focus exclusively on failure
cases. Our experiments are intended primarily for illustration of our theoretical findings rather than exhaustive evaluation.
For a more comprehensive empirical studies documenting the limitations of zero-shot super-resolution,
we refer the reader to \citep{sakarvadia2025false, gao2025discretization}.

\subsection{Synthetic Data: Lower-Bound Setup}
The input functions $v$ are Gaussian random fields sampled on a
uniform grid of $[0,1)$. Outputs are generated using the operator $\operatorname{G}_\omega$, defined by
\[
 (\operatorname{G}_\omega v)(y) = f_\omega(y) \int_0^1 x\, v(x)\, dx,
\]
where $f_\omega(y) \in \{-1, +1\}$ is a Rademacher function defined on the output grid. This
construction isolates resolution dependence: while the scalar inner product can be accurately
learned from coarse data, correct prediction on a finer output grid requires resolving the
unobserved sign pattern $f_\omega$. We train models at resolution $N=128$ and evaluate them at test resolutions $N=128$, $256$, and
$512$. As shown in Table~1, performance is excellent at the training resolution but deteriorates
dramatically on finer grids.  Figure \ref{fig:residual_plot} further illustrates this failure through residual plots comparing the predicted and ground-truth output functions.
\begin{table}[ht]
\centering
\begin{tabular}{|c|c|}
\hline
\textbf{Test Resolution $(N)$} & \textbf{Relative $L^2$ Error} \\
\hline
\hline
128 & 0.0066 \\
256 &0.9614 \\
512 & 1.2331 \\
\hline
\end{tabular}
\caption{Test performance of a model trained at resolution $N=128$ using $2056$ samples. 
Errors are reported as relative $L^2$ norms, averaged over $640$ random test inputs.}
\end{table}

\subsection{Inviscid Burgers Equation}

We next evaluate zero-shot generalization on the one-dimensional inviscid Burgers equation,
\[
\partial_t u(x,t) + \partial_x\!\Big(\tfrac{1}{2}u(x,t)^2\Big) = 0, 
\qquad x \in [0,1], \; t \in [0,T],
\]
with periodic boundary conditions. Initial conditions are generated as random superpositions of $m$ Fourier modes, $
u_0(x) = \sum_{j=1}^{m} 
A_j \sin(2\pi k x + \phi_j) + 
B_j \cos(2\pi k x + \psi_j),
$
where $A_j,B_j \sim \mathrm{Unif}[1.0,5.0]$, fixed wavenumber $k=8$, and phases $\phi_j,\psi_j \sim \mathrm{Unif}[0,2\pi)$. This construction yields smooth random fields with moderate oscillations. Each initial condition is evolved to time $T=0.2$ using a Godunov finite-volume scheme on a fine grid ($N_{\mathrm{hi}}=512$). The resulting data are anti-aliased and downsampled to coarser grids for training and evaluation.

A Fourier Neural Operator is trained on $N=128$-point grids using $640$ samples and evaluated at test resolutions $N=128,256,$ and $512$. Table~\ref{tab:burgers_sr} reports relative $L^2$ errors averaged over $128$ test samples. The model attains high accuracy on the training grid but fails to generalize to finer resolutions.

\begin{table}[H]
\centering
\begin{tabular}{|c|c|}
\hline
\textbf{Test Resolution $(N)$} & \textbf{Relative $L^2$ Error} \\
\hline
\hline
128 & 0.0578 \\
256 & 0.1606 \\
512 & 0.2047 \\
\hline
\end{tabular}
\caption{Test performance of a model trained at resolution $N=128$ on smooth Burgers data. 
Errors are reported as relative $L^2$ norms averaged over $128$ random test inputs.}
\label{tab:burgers_sr}
\end{table}

\noindent Figure~\ref{fig:residual_plot_burgers} compares the ground truth $u$, predictions $\hat{u}$, and residuals $r=\hat{u}-u$ across test resolutions. On the training grid, predictions coincide almost exactly with the reference solution. As the grid is refined, however, residuals increase and develop structured oscillations, particularly near the shock where $u$ transitions sharply between positive and negative values. This degradation in cross-resolution generalization in nonsmooth regions highlights the necessity of H{\"o}lder regularity or related smoothness assumptions
for zero-shot super-resolution.

\begin{figure}[H]
    \centering
    \includegraphics[width=\linewidth]{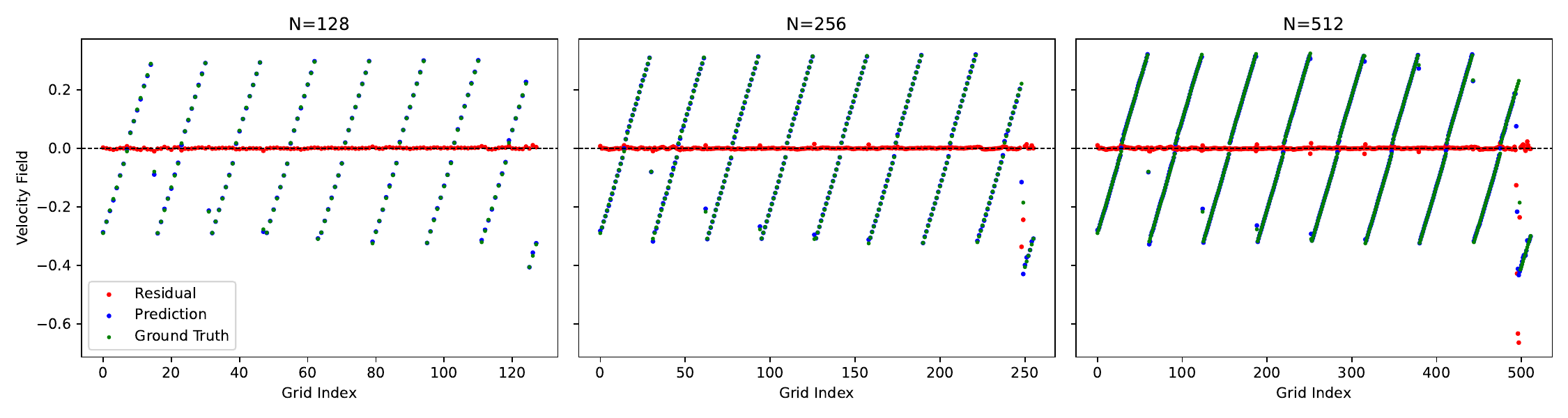}
    \caption{Ground truth $u$, predictions $\hat{u}$, and residuals $r=\hat{u}-u$ 
    across test resolutions $N=128,256,512$. 
    }
    \label{fig:residual_plot_burgers}
\end{figure}

\section{Discussion and Future Work}
Our work initiates a theoretical investigation of zero-shot super-resolution in operator learning, establishing both an impossibility result and sufficient conditions for its feasibility. Our impossibility result shows that zero-shot super-resolution is not guaranteed in general even for simple operators and highly expressive models. This suggest that zero-shot super-resolution should be understood as a non-trivial extrapolation problem rather than a consequence of discretization invariance or model capacity. Our positive results identify H{\" o}lder regularity as a key structural assumption under which zero-shot super-resolution becomes achievable. Finally, we also provide a generalization bound that explains the empirical success observed in smooth problem settings.

Two major directions remain open for future work. One natural extension is to establish tighter bounds for function classes characterized by different notions of regularity, such as Sobolev or Besov spaces. Second, understanding the trade-offs between strict zero-shot guarantees and multi-resolution training with limited higher-resolution samples, as suggested by \cite{sakarvadia2025false}, is also an important direction for developing reliable cross-resolution operator-learning methods.

\section*{Acknowledgements}
We acknowledge the support of NSF via grant DMS-2413089.
US also acknowledges the support of the Rackham Predoctoral Fellowship.

\newpage
\bibliographystyle{plainnat}
\bibliography{arxiv/references}
\newpage 

\appendix

\section{Proof of Theorem \ref{thm:lb-general}}\label{appdx:proof_lb}

\subsection{Constructing the Class $\Gcal$}
Define a function $g: [0,1)^2 \to [0,1]$ such that
\[g(y,x)=yx.\]
Recall that the integral operator of the function $g(y,x)$ is an operator $\operatorname{G}_{\text{base}}$ such that
\[\big(\operatorname{G}_{\text{base}}(v)\big)(y)=\int_{0}^1 g(y,x)\, v(x)\, dx.\]
We will first modify $g$ for certain values of $y$ and take our ground truth to be the integral operator of that modified $g$.

For any sequence $ \xi := \{\xi_y\}_{y \in \mathbb{Q} \cap [0,1)} $, where each $ \xi_y \sim \mathrm{Unif}(\{-1,1\}) $ independently, define a function $ f_{\xi} : [0,1) \to [-1,1] $ by
\[
f_{\xi}(y) := 
\begin{cases}
\xi_y, & \text{if } y \in \mathbb{Q} \cap [0,1), \\[6pt]
y, & \text{otherwise}.
\end{cases}
\]
Since $ f_{\xi}(y) = y $ almost everywhere and the identity function $ y \mapsto y $ is measurable, it follows by \citet[Proposition 2.11]{folland1999real} that $ f_{\xi} $ is also measurable. 

Now, for each sequence $ \xi \in \{-1,1\}^{\mathbb{Q} \cap [0,1)} $, define a function $ g_{\xi} : [0,1) \times [0,1) \to \mathbb{R} $ by
\[
g_{\xi}(y, x) := f_{\xi}(y)\, x.
\]
Note that $g(y,x) = g_{\xi}(y,x)$ almost everywhere. 

Let $ \operatorname{G}_{\xi} $ denote the integral operator associated with this function.  By construction, we have
\[
\operatorname{G}_{\xi}(v)(y) = f_{\xi}(y) \cdot \int_0^1 x \,v(x)\, dx,
\]
which implies that $ \operatorname{G}_{\xi}(v) \in \text{span}(f_{\xi}) $ for all $ v \in L^2([0,1)) $. That is, $ \operatorname{G}_{\xi} $ is a rank-one operator. Moreover,
\begin{equation*}
    \begin{split}
        \norm{\operatorname{G}_{\xi}}_{\text{op}}^2 = \norm{\operatorname{G}_{\xi}}_{\text{HS}}^2 = \int_{0}^1 \int_{0}^1 |g_{\xi}(y,x)|^2\, dy\, dx = \int_{0}^1 \int_{0}^1 |g(y,x)|^2\, dy\, dx
        &= \left( \int_{0}^1 r^2\, dr\right)^2\\
        &= \frac{1}{9}.
    \end{split}
\end{equation*}
Thus, $     \norm{\operatorname{G}_{\xi}}_{\text{op}} \leq 1$. Define a class of operators
\[\Gcal := \left\{\operatorname{G}_{\xi}\, \mid \xi \in \{-1,1\}^{\rationals \cap [0,1)} \right\}.\]

\subsection{Proof of part (i)}

Let $ \mu $ be any probability measure supported on $ L^2(\mathcal{X}) $, such that every $ v \sim \mu $ satisfies $ 0 < a \leq |v(x)| \leq 1 $ for all $x \in \Xcal$. Suppose the ground truth operator $ \operatorname{G} \in \Gcal $. By definition of $ \Gcal $, there exists a sequence $ \xi := \{\xi_y\}_{y \in \mathbb{Q} \cap [0,1)} $ such that $ \operatorname{G} = \operatorname{G}_{\xi} $.

Consider any sample $ (v_1, w_1) $. Then,
\[
w_1(y) = \operatorname{G}_{\xi}(v_1)(y) = \int_{0}^{1} g_{\xi}(y,x)\, v_1(x)\, dx = f_{\xi}(y) \int_{0}^{1} x \, v_1(x)\, dx.
\]
Since $ f_{\xi}(y) = \xi_y $ for all $ y \in \Ycal_{\mathrm{train}} \subseteq \mathbb{Q} \cap [0,1) $, and
\[
\int_0^1 x v_1(x)\, dx \geq c \int_0^1 x\, dx = \frac{c}{2} > 0,
\]
we can recover $ \xi_y $ for all $ y \in \Ycal_{\mathrm{train}} $ via
\[
\xi_y = \frac{w_1(y)}{\int_0^1 x v_1(x)\, dx}.
\]
Now define an estimator $ \widehat{\operatorname{F}}^{\prime} := \operatorname{G}_{\xi'} $ such that
\[
\xi_y^{\prime} = \xi_y \quad \text{for all } y \in \Ycal_{\mathrm{train}}.
\]
Then, for any $ v \in L^2(\Xcal) $,
\[
\widehat{\operatorname{F}}^{\prime}(v)(y) = \xi_y^{\prime} \int_0^1 x v(x)\, dx = \xi_y \int_0^1 x v(x)\, dx = \operatorname{G}_{\xi}(v)(y), \quad \forall y \in \Ycal_{\mathrm{train}}.
\]
Therefore,
\[
\mathbb{E}_{v \sim \mu} \left[ \frac{1}{|\Ycal_{\mathrm{train}}|} \sum_{y \in \Ycal_{\mathrm{train}}} \left( \widehat{\operatorname{F}}^{\prime}(v)(y) - \operatorname{G}(v)(y) \right)^2 \right] = 0.
\]
This completes our proof of part (i).

\subsection{Proof of part (ii)}
Let $ \mu $ be any probability measure supported on $ L^2(\mathcal{X}) $ such that every $ v \sim \mu $ satisfies $ 0 < a \leq |v(x)| \leq 1$ for all $x \in \Xcal$.

Our proof is based on probabilistic method. In particular, we  show that for any estimation rule $ \widehat{\operatorname{F}}_n $, the following bound holds
\begin{equation}\label{eq:lb}
    \expect_{\xi} \left[ \expect_{v_1, \ldots, v_n \sim \mu} \left[ \expect_{v \sim \mu} \left[ 
    \frac{1}{|\Ycal_{\text{test}}|} \sum_{y \in \Ycal_{\text{test}}} \left( \widehat{\operatorname{F}}(v)(y) - \operatorname{G}_{\xi}(v)(y) \right)^2
    \right] \right] \right] 
    \geq \frac{a^2}{4} \left( 1 - \frac{N_1}{N_2} \right).
\end{equation}
By a standard probabilistic argument (see \citet{alon2016probabilistic}), it follows that there exists a particular sequence $ \xi^\star $ and the corresponding operator $ \operatorname{G} := \operatorname{G}_{\xi^\star} $ such that
\[
\expect_{v_1, \ldots, v_n \sim \mu} \left[ \expect_{v \sim \mu} \left[ 
   \frac{1}{|\Ycal_{\text{test}}|} \sum_{y \in \Ycal_{\text{test}}} \left( \widehat{\operatorname{F}}(v)(y) - \operatorname{G}^{\star}(v)(y) \right)^2 
\right] \right] 
\geq \frac{a^2}{4} \left( 1 - \frac{N_1}{N_2} \right).
\]
Using the fact that $N_2=mN_1$ completes our proof.

We now proceed to prove Equation~\eqref{eq:lb}. We can rewrite the left-hand side of Equation~\eqref{eq:lb} and restrict the summation to points in the test grid not included in the training grid as
\begin{equation*}
\begin{split}
&\expect_{\xi} \left[ \expect_{v_1, \ldots, v_n \sim \mu} \left[ \expect_{v \sim \mu} \left[ 
    \frac{1}{|\Ycal_{\text{test}}|} \sum_{y \in \Ycal_{\text{test}}} \left( \widehat{\operatorname{F}}(v)(y) - \operatorname{G}_{\xi}(v)(y) \right)^2
    \right] \right] \right] \\
&\geq \expect_{v_1, \ldots, v_n \sim \mu} \left[ \expect_{v \sim \mu} \left[ \expect_{\xi} \left[
 \frac{1}{|\Ycal_{\text{test}}|} \sum_{y \in \mathcal{Y}_{\text{test}} \setminus \mathcal{Y}_{\text{train}}} 
\left( \widehat{\operatorname{F}}_n(v)(y) - \operatorname{G}_{\xi}(v)(y) \right)^2 
\right] \right] \right],
\end{split}
\end{equation*}
where the final step follows by Fubini's theorem, since the random variables $ v_1, \ldots, v_n \sim \mu $, the test sample $ v \sim \mu $, and the sequence $ \xi \sim \mathrm{Unif}(\{-1,1\})^{\mathbb{Q} \cap [0,1)} $ are all drawn independently. In particular, the order of integration over $ \xi $, the training data, and the test data can be exchanged freely.

Note that for any $ y \in \mathcal{Y}_{\text{test}} \setminus \mathcal{Y}_{\text{train}} $, the squared error can be lower bounded as 
\begin{equation*}
\begin{split}
\left( \widehat{\operatorname{F}}_n(v)(y) - \operatorname{G}_{\xi}(v)(y) \right)^2 
&= \left( \widehat{\operatorname{F}}_n(v)(y) - \xi_y \int_0^1 x v(x) \, dx \right)^2 \\
&= \left( \widehat{\operatorname{F}}_n(v)(y) \right)^2 
- 2 \widehat{\operatorname{F}}_n(v)(y) \cdot \xi_y \int_0^1 x v(x) \, dx 
+ \left( \int_0^1 x v(x) \, dx \right)^2 \\
&\geq -2 \widehat{\operatorname{F}}_n(v)(y) \cdot \xi_y \int_0^1 x v(x) \, dx 
+ \left( \int_0^1 x v(x) \, dx \right)^2.
\end{split}
\end{equation*}

Next, we show that the first term vanishes in expectation. More precisely, for fixed training samples $ v_1, \ldots, v_n $ and a test point $ v \sim \mu $, we have
\begin{equation*}
\begin{split}
\expect_{\xi} \left[ \widehat{\operatorname{F}}_n(v)(y) \cdot \xi_y \int_0^1 x v(x)\, dx \right]
&= \expect_{\xi \setminus \xi_y} \left[ \expect_{\xi_y} \left[ \widehat{\operatorname{F}}_n(v)(y) \cdot \xi_y \int_0^1 x v(x)\, dx \,\bigg|\, \xi \setminus \xi_y \right] \right] \\
&= \expect_{\xi \setminus \xi_y} \left[ \widehat{\operatorname{F}}_n(v)(y) \cdot \left( \int_0^1 x v(x)\, dx \right) \cdot \expect_{\xi_y} \left[ \xi_y \,\big|\, \xi \setminus \xi_y \right] \right] \\
&= 0,
\end{split}
\end{equation*}
since $ \expect_{\xi_y} [\xi_y \mid \xi \setminus \xi_y] = 0 $. The first step uses the law of iterated expectation. The second step follows from the fact that, for any fixed $ v_1, \ldots, v_n $ and $ v $, the random variable $ \xi_y $ is conditionally independent of $ \widehat{\operatorname{F}}_n(v)(y) $ given $ \xi \setminus \xi_y $. This is because the estimator $ \widehat{\operatorname{F}}_n $ only has access to the values of $ \xi_y $ at the training grid $ \mathcal{Y}_{\text{train}} $. For any $ y \notin \mathcal{Y}_{\text{train}} $, the learner receives no information about $ \xi_y $, and therefore the estimator’s prediction at such $ y $ must be independent of $ \xi_y $ conditioned on $ \xi \setminus \xi_y $.

To bound the second term, observe that since $ v(x) \geq a $ for all $ x \in [0,1) $, we have
\[
\left( \int_0^1 x v(x)\, dx \right)^2 
\geq \left( \int_0^1 x \cdot a \, dx \right)^2 
= a^2 \left( \int_0^1 x\, dx \right)^2 
= \frac{a^2}{4}.
\]
Combining this bound with the previous steps, we conclude that
\begin{equation*}
\begin{split}
&    \expect_{\xi} \left[ \expect_{v_1, \ldots, v_n \sim \mu} \left[ \expect_{v \sim \mu} \left[ 
    \frac{1}{|\Ycal_{\text{test}}|} \sum_{y \in \Ycal_{\text{test}}} \left( \widehat{\operatorname{F}}(v)(y) - \operatorname{G}_{\xi}(v)(y) \right)^2
    \right] \right] \right] \\
&\geq \frac{1}{\mathcal{Y}_{\text{test}}} \sum_{y \in \mathcal{Y}_{\text{test}} \setminus \mathcal{Y}_{\text{train}}} \frac{a^2}{4} \\
&= \frac{a^2}{4} \cdot \frac{|\mathcal{Y}_{\text{test}} \setminus \mathcal{Y}_{\text{train}}|}{|\mathcal{Y}_{\text{test}}|}.
\end{split}
\end{equation*}
Noting that $|\mathcal{Y}_{\text{test}} \setminus \mathcal{Y}_{\text{train}}| \geq |\mathcal{Y}_{\text{test}} |- |\mathcal{Y}_{\text{train}} |$ completes our proof of part (ii).

\section{Proof of Theorem \ref{thm:ub}}\label{appdx:ub}
\begin{proof} 
We begin by decomposing the error over the test grid as
\begin{equation*}
\begin{split}
&\mathbb{E}_{v \sim \mu} \left[ \frac{1}{|\mathcal{Y}_{\mathrm{test}}|} \sum_{y \in \mathcal{Y}_{\mathrm{test}}} \left( \widehat{\operatorname{F}}_n(v)(y) - \operatorname{G}(v)(y) \right)^2 \right]\\
&= \mathbb{E}_{v \sim \mu} \left[ \frac{1}{|\mathcal{Y}_{\mathrm{test}}|} \sum_{y \in \mathcal{Y}_{\mathrm{train}}} \left( \widehat{\operatorname{F}}_n(v)(y) - \operatorname{G}(v)(y) \right)^2 \right] \quad \\
&+ \mathbb{E}_{v \sim \mu} \left[ \frac{1}{|\mathcal{Y}_{\mathrm{test}}|} \sum_{y \in \mathcal{Y}_{\mathrm{test}} \setminus \mathcal{Y}_{\mathrm{train}}} \left( \widehat{\operatorname{F}}_n(v)(y) - \operatorname{G}(v)(y) \right)^2 \right].
\end{split}
\end{equation*}

For each $ y \in \mathcal{Y}_{\mathrm{test}} \setminus \mathcal{Y}_{\mathrm{train}} $, let $ \mathrm{nn}(y) \in \mathcal{Y}_{\mathrm{train}} $ denote a nearest neighbor of $ y $ in the training grid, that is
\[
|y - \mathrm{nn}(y)|_2 \leq |y - y'|_2 \quad \text{for all } y' \in \mathcal{Y}_{\mathrm{train}}.
\]
Here, ties may be broken arbitrarily. Then for any such $ y $, we have
\begin{equation*}
\begin{split}
&\left| \widehat{\operatorname{F}}_n(v)(y) - \operatorname{G}(v)(y) \right| \\
&= \left| \widehat{\operatorname{F}}_n(v)(y) - \widehat{\operatorname{F}}_n(v)(\mathrm{nn}(y)) + \widehat{\operatorname{F}}_n(v)(\mathrm{nn}(y)) - \operatorname{G}(v)(\mathrm{nn}(y)) + \operatorname{G}(v)(\mathrm{nn}(y)) - \operatorname{G}(v)(y) \right| \\
&\leq \left| \widehat{\operatorname{F}}_n(v)(y) - \widehat{\operatorname{F}}_n(v)(\mathrm{nn}(y)) \right| 
+ \left| \widehat{\operatorname{F}}_n(v)(\mathrm{nn}(y)) - \operatorname{G}(v)(\mathrm{nn}(y)) \right| 
+ \left| \operatorname{G}(v)(\mathrm{nn}(y)) - \operatorname{G}(v)(y) \right| \\
&\leq c |y - \mathrm{nn}(y)|_2^\alpha 
+ \left| \widehat{\operatorname{F}}_n(v)(\mathrm{nn}(y)) - \operatorname{G}(v)(\mathrm{nn}(y)) \right| 
+ c |y - \mathrm{nn}(y)|_2^\alpha \\
&= 2c |y - \mathrm{nn}(y)|_2^\alpha 
+ \left| \widehat{\operatorname{F}}_n(v)(\mathrm{nn}(y)) - \operatorname{G}(v)(\mathrm{nn}(y)) \right|,
\end{split}
\end{equation*}
where we have used the assumption that both $ \widehat{\operatorname{F}}_n(v) $ and $ \operatorname{G}(v) $ are uniformly Hölder continuous with exponent $ \alpha $ and constant $ c $. Using the inequality $(a+b)^2 \leq 2(a^2+b^2)$, we have
\begin{equation*}
    \begin{split}
      \left| \widehat{\operatorname{F}}_n(v)(y) - \operatorname{G}(v)(y) \right|^2 &\leq 8c^2 |y - \mathrm{nn}(y)|_2^{2\alpha}
+ 2\left| \widehat{\operatorname{F}}_n(v)(\mathrm{nn}(y)) - \operatorname{G}(v)(\mathrm{nn}(y)) \right|^2\\
&\leq  8c^2 \beta^{2\alpha}
+ 2\left| \widehat{\operatorname{F}}_n(v)(\mathrm{nn}(y)) - \operatorname{G}(v)(\mathrm{nn}(y)) \right|^2. 
    \end{split}
\end{equation*}

Thus, we have
\begin{equation*}
\begin{split}
&\expect_{v \sim \mu} \left[ \frac{1}{|\mathcal{Y}_{\mathrm{test}}|} \sum_{y \in \mathcal{Y}_{\mathrm{test}} \setminus \mathcal{Y}_{\mathrm{train}}} \left( \widehat{\operatorname{F}}_n(v)(y) - \operatorname{G}(v)(y) \right)^2 \right] \\
&\leq  \frac{|\Ycal_{\mathrm{test}} \setminus \Ycal_{\mathrm{train}}|}{|\Ycal_{\mathrm{test}}|}\cdot 8c^2 \beta^{2\alpha}+ 2\expect_{v \sim \mu} \left[ \frac{1}{|\mathcal{Y}_{\mathrm{test}}|} \sum_{y \in \mathcal{Y}_{\mathrm{test}} \setminus \mathcal{Y}_{\mathrm{train}}} \left| \widehat{\operatorname{F}}_n(v)(\mathrm{nn}(y)) - \operatorname{G}(v)(\mathrm{nn}(y)) \right|^2. \right]
\end{split}
\end{equation*}

Note that
\begin{equation*}
    \begin{split}
        \sum_{y \in \mathcal{Y}_{\mathrm{test}} \setminus \mathcal{Y}_{\mathrm{train}}} &\left| \widehat{\operatorname{F}}_n(v)(\mathrm{nn}(y)) - \operatorname{G}(v)(\mathrm{nn}(y)) \right|^2\\
        &= \sum_{z \in \Ycal_{\mathrm{train}}}   \sum_{y \in \mathcal{Y}_{\mathrm{test}} \setminus \mathcal{Y}_{\mathrm{train}}} \left| \widehat{\operatorname{F}}_n(v)(z) - \operatorname{G}(v)(z) \right|^2 \indicator[\text{nn}(y)=z]\\
        &\leq \left(\sup_{z \in \Ycal_{\mathrm{train}}}  \sum_{y \in \mathcal{Y}_{\mathrm{test}} \setminus \mathcal{Y}_{\mathrm{train}}} \indicator[\mathrm{nn}(y)=z]\right)\, \cdot  \sum_{z \in \Ycal_{\mathrm{train}}} \left| \widehat{\operatorname{F}}_n(v)(z) - \operatorname{G}(v)(z) \right|^2\\
        &= (\nu-1) \cdot  \sum_{z \in \Ycal_{\mathrm{train}}} \left| \widehat{\operatorname{F}}_n(v)(z) - \operatorname{G}(v)(z) \right|^2.
    \end{split}
\end{equation*}

Note that the final equality holds because the point itself is the nearest neighbor for all $y \in \Ycal_{\mathrm{train}}$.  Thus, we have shown that 
\begin{equation*}
\begin{split}
&\expect_{v \sim \mu} \left[ \frac{1}{|\mathcal{Y}_{\mathrm{test}}|} \sum_{y \in \mathcal{Y}_{\mathrm{test}} \setminus \mathcal{Y}_{\mathrm{train}}} \left( \widehat{\operatorname{F}}_n(v)(y) - \operatorname{G}(v)(y) \right)^2 \right] \\
&\leq  \frac{|\Ycal_{\mathrm{test}} \setminus \Ycal_{\mathrm{train}}|}{|\Ycal_{\mathrm{test}}|}\cdot 8c^2 \beta^{2\alpha}  + 2(\nu-1) \cdot \expect_{v \sim \mu} \left[ \frac{1}{|\mathcal{Y}_{\mathrm{test}}|} \sum_{z \in \Ycal_{\mathrm{train}}} \left| \widehat{\operatorname{F}}_n(v)(z) - \operatorname{G}(v)(z) \right|^2 \right]\\
&= \frac{|\Ycal_{\mathrm{test}} \setminus \Ycal_{\mathrm{train}}|}{|\Ycal_{\mathrm{test}}|}\cdot 8c^2 \beta^{2\alpha}  + \frac{|\mathcal{Y}_{\mathrm{train}}| }{|\Ycal_{\mathrm{test}}|} \cdot 2(\nu-1) \cdot \mathcal{E}_{\mu}(\widehat{\operatorname{F}}_n, \operatorname{G}, \Ycal_{\mathrm{train}}).
\end{split}
\end{equation*}
Therefore, by combining everything, we have shown that
\begin{equation*}
    \begin{split}
       \mathbb{E}_{v \sim \mu} &\left[ \frac{1}{|\mathcal{Y}_{\mathrm{test}}|} \sum_{y \in \mathcal{Y}_{\mathrm{test}}} \left( \widehat{\operatorname{F}}_n(v)(y) - \operatorname{G}(v)(y) \right)^2 \right] \\
       &\leq \frac{|\mathcal{Y}_{\mathrm{train}}| }{|\Ycal_{\mathrm{test}}|} \cdot (2\nu-1) \cdot \mathcal{E}_{\mu}(\widehat{\operatorname{F}}_n, \operatorname{G}, \Ycal_{\mathrm{train}})  + \frac{|\Ycal_{\mathrm{test}} \setminus \Ycal_{\mathrm{train}}|}{|\Ycal_{\mathrm{test}}|}\cdot 8c^2 \beta^{2\alpha}.
    \end{split}
\end{equation*}
Here, $2(\nu-1)$ becomes $2\nu-1$ as we also account for extra term from the decomposition in the first step of this proof. Finally, noting that $|\Ycal_{\mathrm{test}} \setminus \Ycal_{\mathrm{train}}|= |\Ycal_{\mathrm{test}}| - |\Ycal_{\mathrm{train}}|$ for $\Ycal_{\mathrm{train}} \subseteq \Ycal_{\mathrm{test}}$ completes our proof.

\end{proof}
\section{Proof of Corollary \ref{cor:uniform}}\label{appdx:proof_corr}

\begin{proof}
    \noindent
Let $y := (y_1, \ldots, y_d) \in \Ycal_{\mathrm{test}}$. By construction of the test grid, each coordinate can be written as $y_j = \frac{r_j}{N_2}$ for some $r_j \in \{0,1,\ldots,N_2 - 1\}$. Since $N_2 = m N_1$, we can further express
\[
y_j = \frac{r_j}{m} \cdot \frac{1}{N_1}.
\]
Now define a point $y' := (y_1', \ldots, y_d') \in \Ycal_{\mathrm{train}}$ by rounding each $r_j/m$ to its nearest integer:
\[
y_j' := \frac{[r_j/m]}{N_1},
\]
where $[r_j/m]$ denotes the closest integer to $r_j/m$ in $\{0, 1, \ldots, N_1 - 1\}$, with ties rounded down. Note that although $r_j/m$ can exceed $N_1 - 1$, we always map it to $N_1 - 1$, as $1 = N_1/N_1$ is not in the training grid.

By this construction, $y' \in \Ycal_{\mathrm{train}}$ is the nearest neighbor of $y$, and the Euclidean distance satisfies
\[
\beta := |y - y'|_2 \leq \frac{1}{N_1} \sqrt{ \sum_{j=1}^d \left| \frac{r_j}{m} - \left[ \frac{r_j}{m} \right] \right|^2 } \leq \frac{\sqrt{d}}{N_1}.
\]

\medskip 
Next, to bound $\nu$, consider how many test points $y \in \Ycal_{\mathrm{test}}$ can be mapped to a fixed training point $y' \in \Ycal_{\mathrm{train}}$. Fix a coordinate $j \in \{1, \ldots, d\}$, and suppose $y_j' = \frac{t_j}{N_1}$, where $t_j \in \{0, \ldots, N_1 - 1\} \setminus \{N_1-1\}$. Then $y_j = \frac{r_j}{m} \cdot \frac{1}{N_1}$ is mapped to $y_j'$ if and only if
\[
\frac{r_j}{m} \in \left(t_j - \frac{1}{2},\, t_j + \frac{1}{2} \right] \quad \Longleftrightarrow \quad r_j \in \left( m t_j - \frac{m}{2},\, m t_j + \frac{m}{2} \right].
\]
Since $r_j \in \{0, \ldots, N_2 - 1\}$, this interval contains at most $m$ integer values. However, in the edge case when $t_j = N_1 - 1$, the interval becomes
\[
r_j \in \left( m(N_1 - 2) + \frac{m}{2},\, m N_1 - 1 \right],
\]
which contains at most $1.5m$ integer values.

Therefore, for each coordinate, there are at most $1.5m$ admissible values of $r_j$, so the total number of test points assigned to a single training point is bounded by
\[
\nu \leq (1.5m)^d.
\]

Finally, noting 
\[\frac{|\Ycal_{\mathrm{train}}|}{|\Ycal_{\mathrm{test}}|} = \frac{1}{m^d}\]
and 
\[2\nu-1 = 2 (1.5m)^d -1 \leq 2^{d+1}\]
completes our proof.
\end{proof}

\section{Refined Bound for Uniform Grid}\label{appdx:refined}

Recall that in the proof of Corollary~\ref{thm:ub}, each test point $ y \in \Ycal_{\mathrm{test}} \setminus \Ycal_{\mathrm{train}} $ is assigned to its nearest neighbor in $ \Ycal_{\mathrm{train}} $. In a uniform grid, this nearest-neighbor mapping can lead to edge cases near the boundary (e.g., near $ y_j = 1 $), where a disproportionate number of test points may be mapped to a single training point. To address this, one can modify the assignment rule to use a slightly adjusted rounding scheme rather than strict nearest-neighbor mapping, ensuring a more balanced reuse across the grid.

To that end, $y := (y_1, \ldots, y_d) \in \Ycal_{\mathrm{test}}$ such that $y_j = \frac{r_j}{mN_1}$ for some $r_j \in \{0,1,\ldots,N_2 - 1\}$. 
Now define a point $y' := (y_1', \ldots, y_d') \in \Ycal_{\mathrm{train}}$ by mapping each $y_j$ to
\[
y_j' := \frac{\floor{r_j/m}}{N_1}.
\]
Instead of mapping $r_j/m$ to the closest integer, we map it to its floor value, which is close but not the nearest neighbor. Clearly,
\[
|y - y'|_2 \leq \frac{1}{N_1} \sqrt{ \sum_{j=1}^d \left| \frac{r_j}{m} - \left\lfloor \frac{r_j}{m} \right\rfloor \right|^2 } \leq \frac{\sqrt{d}}{N_1}.
\]

\medskip 
Next, we bound the number of test points $y \in \Ycal_{\mathrm{test}}$ can be mapped to a fixed training point $y' \in \Ycal_{\mathrm{train}}$. Fix a coordinate $j \in \{1, \ldots, d\}$, and suppose $y_j' = \frac{t_j}{N_1}$, where $t_j \in \{0, \ldots, N_1 - 1\}$. Then $y_j = \frac{r_j}{m} \cdot \frac{1}{N_1}$ is mapped to $y_j'$ if and only if
\[
\frac{r_j}{m} \in \left[t_j, t_j+1\right) \quad \Longleftrightarrow \quad r_j \in \left[ m t_j, mt_j +m \right).
\]
Thus, $r$ can take  $\leq m$ values.

Therefore, for each coordinate, there are at most $m$ admissible values of $r_j$, so the total number of test points assigned to a single training point is bounded by
\[
m^d.
\]
Redoing the proof of Theorem \ref{thm:ub} where each $y \in  \Ycal_{\mathrm{test}}\setminus \Ycal_{\mathrm{train}}$ is mapped to $y^{\prime} \in \Ycal_{\mathrm{train}}$ yields the claimed bound of 
\[2\mathcal{E}_\mu(\widehat{\operatorname{F}}_n, \operatorname{G}, \Ycal_{\mathrm{train}}) + \left( 1-\frac{1}{m^d}\right) \cdot 8c^2\cdot  \left(\frac{\sqrt{d}}{N_1} \right)^{2\alpha}.\]

\section{Bound on $\nu_{k,\ell}$ for Non-uniform Grids}
\label{appdx:nonuniform}

In this section, we derive a bound on the load-balancing factor $\nu_{k,\ell}$ under general geometric conditions. Let $\Ycal_k \subseteq \Ycal_\ell \subseteq \Ycal \subseteq \mathbb{R}^d$ be two grids. Recall that
\[
\nu_{k,\ell}
:=
\max_{z \in \Ycal_k}
\#\{y \in \Ycal_\ell : \mathrm{nn}(y)=z\},
\]
where $\mathrm{nn}(y)$ denotes the nearest neighbor of $y$ in $\Ycal_k$.

\begin{lemma}
Let $\Ycal_k \subseteq \Ycal_\ell \subseteq \Ycal$ be finite sets. Let $h_k$ denote the fill distance of $\Ycal_k$ and $q_\ell$ the separation distance of $\Ycal_\ell$. Then
\[
\nu_{k,\ell}
\le
\left(1+\frac{h_k}{q_\ell}\right)^d.
\]
\end{lemma}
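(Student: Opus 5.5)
The plan is a standard volume-packing argument. Fix $z \in \Ycal_k$ and let $S_z := \{y \in \Ycal_\ell : \mathrm{nn}(y) = z\}$. Every $y \in S_z$ satisfies $|y - z|_2 = \min_{z' \in \Ycal_k} |y - z'|_2 \le h_k$, since $h_k$ is the fill distance of $\Ycal_k$ taken as a supremum over all of $\Ycal \supseteq \Ycal_\ell$. So every point of $S_z$ lies in the closed ball $B(z, h_k)$.

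Next I use the separation of $\Ycal_\ell$. Distinct points of $\Ycal_\ell$ are at distance at least $2q_\ell$ by definition of $q_\ell$. Hence the open balls $B(y, q_\ell)$, $y \in S_z$, are pairwise disjoint. Each is contained in $B(z, h_k + q_\ell)$ by the triangle inequality.

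Comparing Lebesgue measures gives
\[
|S_z|\, \omega_d\, q_\ell^d \le \omega_d\, (h_k + q_\ell)^d,
\]
where $\omega_d$ is the volume of the unit ball in $\reals^d$. Dividing by $\omega_d q_\ell^d$ yields $|S_z| \le (1 + h_k/q_\ell)^d$. Taking the maximum over $z \in \Ycal_k$ gives the claimed bound on $\nu_{k,\ell}$.

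This holds for any deterministic tie-breaking, so it holds in particular for the $\nu$-minimizing one in Definition~\ref{def:grid_params}. The counts include $y = z$ itself, consistent with the definition of $\nu$.

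There is no serious obstacle; the only points needing care are these:
\begin{itemize}
\item Use the convention $q_\ell = \tfrac12 \min |z - z'|_2$, so that the disjoint balls have radius exactly $q_\ell$.
\item The balls are allowed to extend outside $\Ycal$; the comparison is done in $\reals^d$, so boundary effects cause no trouble.
\item If $\Ycal_\ell$ is a singleton, $q_\ell$ is undefined. In that case $\nu = 1$ and the bound is trivially true if we interpret $h_k/q_\ell = 0$ or $\infty$.
\end{itemize}
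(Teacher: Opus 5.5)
Your proof is correct and follows the paper's argument essentially step for step. Both show that the nearest-neighbor cell lies in $B(z,h_k)$, that the radius-$q_\ell$ balls around its points are pairwise disjoint, and that these balls sit inside $B(z,h_k+q_\ell)$, then conclude by comparing volumes. Your remarks on tie-breaking, on balls extending outside $\Ycal$, and on the singleton case are harmless additions that the paper does not spell out.
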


\begin{proof}
Fix $z \in \Ycal_k$ and define
\[
A_z := \{y \in \Ycal_\ell : \mathrm{nn}(y)=z\}.
\]
We will bound $|A_z|$ uniformly in $z$. If $y \in A_z$, then by definition of nearest neighbor,
\[
|y - z|_2 = \min_{z' \in \Ycal_k} |y - z'|_2 \le h_k.
\]
Hence,
\[
A_z \subseteq B(z,h_k).
\]

By definition of the separation distance $q_\ell$, any two distinct points $y,y' \in \Ycal_\ell$ satisfy
\[
|y - y'|_2 \ge 2q_\ell.
\]
Therefore, the open balls $\{B^\circ(y,q_\ell)\}_{y \in A_z}$ are pairwise disjoint.

Let $y \in A_z$ and $x \in B^\circ(y,q_\ell)$. Then
\[
|x - z|_2 \le |x - y|_2 + |y - z|_2 < q_\ell + h_k,
\]
so
\[
B^\circ(y,q_\ell) \subseteq B(z,h_k + q_\ell)
\quad \text{for all } y \in A_z.
\]
Since this is true for all $y$ and $z$, we have
\[
\bigcup_{y \in A_z} B^\circ(y,q_\ell) \subseteq B(z,h_k + q_\ell).
\]
Since the sets $\{B^\circ(y,q_\ell)\}_{y \in A_z}$ are pairwise disjoint,
\[
\operatorname{vol}\!\left(\bigcup_{y \in A_z} B^\circ(y,q_\ell)\right)
=
\sum_{y \in A_z} \operatorname{vol}(B^\circ(y,q_\ell))
=
|A_z| \cdot \operatorname{vol}(B^\circ(0,q_\ell)),
\]
where the second equality holds due to translation invariance of Lebesgue measure.
Since the union is contained in $B(z,h_k + q_\ell)$, we obtain
\[
|A_z| \cdot \operatorname{vol}(B^\circ(0,q_\ell))
\le
\operatorname{vol}(B(z,h_k + q_\ell)).
\]
Using translation invariance of Lebesgue measure and the fact that open and closed balls have the same volume,
\[
|A_z| \cdot \operatorname{vol}(B(0,q_\ell))
\le
\operatorname{vol}(B(0,h_k + q_\ell)).
\]
Finally, using the scaling of volume in $\mathbb{R}^d$, we conclude
\[
|A_z|
\le
\left(\frac{h_k + q_\ell}{q_\ell}\right)^d
=
\left(1+\frac{h_k}{q_\ell}\right)^d.
\]

Taking the maximum over $z \in \Ycal_k$ yields the desired bound on $\nu_{k,\ell}$.
\end{proof}

\section{Proof of Proposition \ref{prop:green}}\label{appdx:proof_green}

Let
\[
u(y) := \int_{\Omega} g(y, x)\, f(x)\, dx.
\]
Then for any $ y_1, y_2 \in \Omega $, we have
\begin{equation*}
\begin{split}
|u(y_1) - u(y_2)|
&= \left| \int_{\Omega} \big( g(y_1, x) - g(y_2, x) \big)\, f(x)\, dx \right| \\
&\leq \int_{\Omega} |g(y_1, x) - g(y_2, x)|\, |f(x)|\, dx \\
&\leq \left( \int_{\Omega} |g(y_1, x) - g(y_2, x)|^2 dx \right)^{1/2} \cdot \|f\|_{L^2} \\
&\leq c\, |y_1 - y_2|^{\alpha} \cdot \sqrt{\operatorname{vol}(\Omega)} \cdot \|f\|_{L^2}.
\end{split}
\end{equation*}

Here, $ c $ is a uniform bound on the Hölder coefficient of $ y \mapsto g(y,x) $, valid for almost every $ x \in \Omega $. Since $ \Omega $ is bounded, we conclude that $ u \in C^{0,\alpha}(\Omega) $, completing the proof.

\section{Proof of Proposition \ref{prop:holder-NO}}\label{appdx:proof_prop_NO}
\begin{proof}
Let 
\[w(y):= \sigma \Big(\int_{\mathcal{X}} k(y, x)\, v(x) \,dx + b(y) \Big). \]
Then, using the fact that $\sigma(\cdot)$ is $1$-Lipschitz, we have
\begin{equation*}
    \begin{split}
        |w(y_1) -w(y_2)| &= \left|\sigma \Big(\int_{\mathcal{X}} k(y_1, x)\, v(x) \,dx + b(y_1) \Big) - \sigma \Big(\int_{\mathcal{X}} k(y_2, x)\, v(x) \,dx + b(y_2) \Big) \right|\\
        &\leq \left|\int_{\mathcal{X}} k(y_1, x)\, v(x) \,dx + b(y_1)  - \int_{\mathcal{X}} k(y_2, x)\, v(x) \,dx - b(y_2)  \right|\\
        &= \left|\int_{\mathcal{X}} (k(y_1, x)-k(y_2, x))\, v(x) \,dx + b(y_1)  - b(y_2)  \right|\\
        &\leq \left|\int_{\mathcal{X}} (k(y_1, x)-k(y_2, x))\, v(x) \,dx \right| + \left|b(y_1)  - b(y_2)  \right|
    \end{split}
\end{equation*}
Note that $\left|b(y_1)  - b(y_2)  \right| \leq [b] \, |y_1-y_2|^{\alpha}$. Similarly, 
\begin{equation*}
    \begin{split}
   &\left|\int_{\mathcal{X}} (k(y_1, x)-k(y_2, x))\, v(x) \,dx \right|\\
   &= \sqrt{\sup_{x} |k(y_1, x)-k(y_2, x)|^2 \,\,  \text{vol}(\Xcal)}\, \, \, \norm{v}_{L^2}\\
   &= [k]\, |y_1-y_2|^{\alpha} \, \sqrt{\text{vol}(\Xcal)} \,\,\norm{v}_{L^2},
    \end{split}
\end{equation*}
where $[k]$ is supremum over all Holder coefficient $[k(\cdot, x)]$.  Recall that $[k]<\infty$ by assumption. Thus, combining everything, we have
\[   |w(y_1) -w(y_2)| \leq \left( [k]\,  \sqrt{\text{vol}(\Xcal)}\, \norm{v}_{L^2} + [b]\right) |y_1-y_2|^{\alpha}. \]
This completes our proof.
\end{proof}

\section{Proof of Theorem \ref{thm:ub-discreteinp}}
Note that
\begin{equation*}
    \begin{split}
           \Ecal_{\mu}(\widehat{\operatorname{F}}_n, \operatorname{G}, \Xcal_{\mathrm{test}}, \Ycal_{\mathrm{test}} )= \expect_{v \sim \mu} \left[ \frac{1}{|\Ycal_{\mathrm{test}}|} \sum_{y \in \Ycal_{\mathrm{test}}} \left( \widehat{\operatorname{F}}_n(v_{|\Xcal_{\mathrm{test}}})(y) - \operatorname{G}(v)(y) \right)^2 \right].
    \end{split}
\end{equation*}
We can write 
\begin{equation*}
    \begin{split}
       &|\widehat{\operatorname{F}}_n(v_{|\Xcal_{\mathrm{test}}})(y) - \operatorname{G}(v)(y)| \\
       &= |\widehat{\operatorname{F}}_n(v_{|\Xcal_{\mathrm{test}}})(y) - \widehat{\operatorname{F}}_n(v_{|\Xcal_{\mathrm{train}}})(y) +\widehat{\operatorname{F}}_n(v_{|\Xcal_{\mathrm{train}}})(y) -\operatorname{G}(v)(y)|\\
       &\leq |\widehat{\operatorname{F}}_n(v_{|\Xcal_{\mathrm{test}}})(y) - \widehat{\operatorname{F}}_n(v_{|\Xcal_{\mathrm{train}}})(y)| + |\widehat{\operatorname{F}}_n(v_{|\Xcal_{\mathrm{train}}})(y) -\operatorname{G}(v)(y)|
    \end{split}
\end{equation*}
Thus, using the inequality $(a+b)^2 \leq 2(a^2+b^2)$, we have
\begin{equation*}
    \begin{split}
         &\Ecal_{\mu}(\widehat{\operatorname{F}}_n, \operatorname{G}, \Xcal_{\mathrm{test}}, \Ycal_{\mathrm{test}} ) \\
         &= 2\expect_{v \sim \mu} \left[ \frac{1}{|\Ycal_{\mathrm{test}}|} \sum_{y \in \Ycal_{\mathrm{test}}} |\widehat{\operatorname{F}}_n(v_{|\Xcal_{\mathrm{test}}})(y) - \widehat{\operatorname{F}}_n(v_{|\Xcal_{\mathrm{train}}})(y)| ^2 \right] \\
         &+ 2\expect_{v \sim \mu} \left[ \frac{1}{|\Ycal_{\mathrm{test}}|} \sum_{y \in \Ycal_{\mathrm{test}}} |\widehat{\operatorname{F}}_n(v_{|\Xcal_{\mathrm{train}}})(y) -\operatorname{G}(v)(y)|^2 \right]
    \end{split}
\end{equation*}

For the second term, using the same arguments as in the proof of Theorem~\ref{thm:ub}, we obtain
\begin{equation*}
\begin{split}
\expect_{v \sim \mu} \left[ \frac{1}{|\Ycal_{\mathrm{test}}|} \sum_{y \in \Ycal_{\mathrm{test}}} \left|\widehat{\operatorname{F}}_n(v_{|\Xcal_{\mathrm{train}}})(y) - \operatorname{G}(v)(y)\right|^2 \right]
&\leq \frac{|\Ycal_{\mathrm{train}}|}{|\Ycal_{\mathrm{test}}|} \cdot (2\nu - 1) \cdot \Ecal_{\mu}(\widehat{\operatorname{F}}_n, \operatorname{G}, \Xcal_{\mathrm{train}}, \Ycal_{\mathrm{train}}) \\
&\quad + \frac{|\Ycal_{\mathrm{test}} \setminus \Ycal_{\mathrm{train}}|}{|\Ycal_{\mathrm{test}}|} \cdot 8c^2 \beta^{2\alpha}.
\end{split}
\end{equation*}
This completes the contribution of the second term in the bound stated in Theorem~\ref{thm:ub-discreteinp}.

So, it remains to bound
\[2\expect_{v \sim \mu} \left[ \frac{1}{|\Ycal_{\mathrm{test}}|} \sum_{y \in \Ycal_{\mathrm{test}}} |\widehat{\operatorname{F}}_n(v_{|\Xcal_{\mathrm{test}}})(y) - \widehat{\operatorname{F}}_n(v_{|\Xcal_{\mathrm{train}}})(y)| ^2 \right].\]
Again using $ (a+b)^2 \leq 2a^2 + 2b^2 $, we can further write
\begin{equation*}
\begin{split}
&2\expect_{v \sim \mu} \left[ \frac{1}{|\Ycal_{\mathrm{test}}|} \sum_{y \in \Ycal_{\mathrm{test}}} \left( \widehat{\operatorname{F}}_n(v_{|\Xcal_{\mathrm{test}}})(y) - \widehat{\operatorname{F}}_n(v_{|\Xcal_{\mathrm{train}}})(y) \right)^2 \right] \\
&\leq 4 \expect_{v \sim \mu} \left[ \frac{1}{|\Ycal_{\mathrm{test}}|} \sum_{y \in \Ycal_{\mathrm{test}}} \left( \widehat{\operatorname{F}}_n(v)(y) - \widehat{\operatorname{F}}_n(v_{|\Xcal_{\mathrm{test}}})(y) \right)^2 \right] \\
&\quad + 4 \expect_{v \sim \mu} \left[ \frac{1}{|\Ycal_{\mathrm{test}}|} \sum_{y \in \Ycal_{\mathrm{test}}} \left( \widehat{\operatorname{F}}_n(v)(y) - \widehat{\operatorname{F}}_n(v_{|\Xcal_{\mathrm{train}}})(y) \right)^2 \right]\\
&\leq 4(\varepsilon_{k_1} +\varepsilon_{k_2}).
\end{split}
\end{equation*}
The final step uses the fact that $\Xcal_{\mathrm{train}}=\Xcal_{k_1}$ and $\Xcal_{\mathrm{test}}=\Xcal_{k_2}$.

\end{document}